\renewcommand{\citename}{\citet}
\renewcommand{\cite}{\citep}
\pgfplotsset{compat=1.5, every axis/.append style={font=\small, /pgf/number format/1000 sep={}}, every mark/.append style={solid},}
\newcommand{\eg}{e.g.,~}
\begin{document}
\title{Deep Declarative Networks: A New Hope}

\iffalse
\author{Stephen~Gould,~\IEEEmembership{Member,~IEEE,}
		Richard~Hartley,~\IEEEmembership{Fellow,~IEEE,}
        Dylan~Campbell,~\IEEEmembership{Member,~IEEE}%
\IEEEcompsocitemizethanks{\IEEEcompsocthanksitem S. Gould and D. Campbell are
	with the Australian Centre for Robotic Vision and the Research School of
	Computer Science at the Australian National University. R. Hartley is with
	the the Australian Centre for Robotic Vision and the
	Research School of Engineering at the Australian National University.
	E-mail: stephen.gould@anu.edu.au}%
}

\markboth{PAMI}{Gould \etal: Deep Declarative Networks}
\else
\author{Stephen~Gould,
	Richard~Hartley,
	Dylan~Campbell%
	\IEEEcompsocitemizethanks{\IEEEcompsocthanksitem S. Gould and D. Campbell are
		with the Australian Centre for Robotic Vision and the Research School of
		Computer Science at the Australian National University. R. Hartley is with
		the the Australian Centre for Robotic Vision and the
		Research School of Engineering at the Australian National University.
		E-mail: stephen.gould@anu.edu.au}%
}

\markboth{preprint}{Gould \etal: Deep Declarative Networks}
\fi

\IEEEtitleabstractindextext{%
\begin{abstract}
We explore a new class of end-to-end learnable models wherein data processing
nodes (or network layers) are defined in terms of desired behavior rather than an
explicit forward function. Specifically, the forward function is implicitly
defined as the solution to a mathematical optimization problem. Consistent with nomenclature in
the programming languages community, we name these models \emph{deep declarative networks}.
Importantly, we show that the class of deep declarative networks subsumes
current deep learning models. Moreover, invoking the implicit function theorem, we
show how gradients can be back-propagated through many declaratively defined data processing
nodes thereby enabling end-to-end learning. We show how these declarative processing
nodes can be implemented in the popular PyTorch deep learning software library allowing
declarative and imperative nodes to co-exist within the same network. We also provide
numerous insights and illustrative examples of declarative nodes and demonstrate their
application for image and point cloud classification tasks.
\end{abstract}

\begin{IEEEkeywords}
Deep Learning, Implicit Differentiation, Declarative Networks
\end{IEEEkeywords}}

\maketitle

\IEEEdisplaynontitleabstractindextext
\IEEEpeerreviewmaketitle

\IEEEraisesectionheading{\section{Introduction}\label{sec:introduction}}

\IEEEPARstart{M}{odern} deep learning models are composed of
parametrized processing nodes (or layers) organized in a directed
graph. There is an entire zoo of different model architectures
categorized primarily by graph structure and mechanisms for
parameter sharing~\cite{Lecun:Nature15}. In all cases the function for
transforming data from the input to the output of a processing node is
explicitly defined. End-to-end learning is then achieved by
back-propagating an error signal along edges in the graph and
adjusting parameters so as to minimize the error. Almost universally,
the error signal is encoded as the gradient of some global objective
(or regularized loss) function and stochastic gradient descent based methods are used to
iteratively update parameters. Automatic differentiation (or
hand-derived gradients) is used to compute the derivative of the
output of a processing node with respect to its input, which is then
combined with the chain rule of differentiation proceeding backwards
through the graph. The error gradient can thus be calculated
efficiently for all parameters of the model as the signal is passed
backwards through each processing node.

In this paper we advocate for a new class of end-to-end learnable
models, which we call \emph{deep declarative networks} (DDNs), and
which collects several recent works on differentiable optimization~\cite{Gould:TR2016, Amos:ICML2017, Amos:PhD2019, Agrawal:TR2019, Agrawal:NIPS2019}
and related ideas~\cite{Chen:NIPS2018, Tschiatschek:NIPS2018, Wang:ICML2019} into a single framework. Instead
of explicitly defining the forward processing function, nodes in a DDN
are defined implicitly by specifying behavior. That is, the
input--output relationship for a node is defined in terms of an objective
and constraints in a mathematical optimization problem, the output being the
solution of the problem conditioned on the input (and
parameters). Importantly, as we will show, we can still perform
back-propagation through a DDN by making use of implicit
differentiation. Moreover, the gradient calculation does not require
knowledge of the method used to solve the optimization problem, only
the form of the objective and constraints, thereby allowing any state-of-the-art
solver to be used during the forward pass.

DDNs subsume conventional deep learning models in that any explicitly
defined forward processing function can also be defined as a node in a
DDN. Furthermore, declaratively defined nodes and explicitly defined
nodes can coexist within the same end-to-end learnable model. To make
the distinction clear, when both types of nodes appear in the same
model we refer to the former as \emph{declarative nodes} and the
latter as \emph{imperative nodes}.  To this end, we have developed a reference DDN
implementation within PyTorch~\cite{PyTorch},
a popular software library for deep learning,
supporting both declarative and imperative nodes.

We present some theoretical results that show the conditions under
which exact gradients can be computed and the form of such gradients. We
also discuss scenarios in which the exact gradient cannot be computed
(such as non-smooth objectives) but a descent direction can still be
found, allowing stochastic optimization of model parameters to
proceed. These ideas are explored through a series of illustrative
examples and tested experimentally on the problems of image and point
cloud classification using modified ResNet~\cite{He:CVPR2016} and
PointNet~\cite{Qi:CVPR2017} architectures, respectively.

The decisive advantage of the declarative view of deep neural networks is
that it enables the use of classic constrained and unconstrained optimization algorithms
as a modular component within a larger, end-to-end learnable network.
This extends the concept of a neural network layer to include, for example,
geometric model fitting, such as relative or absolute pose solvers or bundle adjustment,
model-predictive control algorithms, expectation-maximization, matching, optimal transport,
and structured prediction solvers to name a few.
Moreover, the change in perspective can help us envisage variations of
standard neural network operations with more desirable properties,
such as robust feature pooling instead of standard average pooling.
There is also the potential to reduce opacity and redundancy in networks by
incorporating local model fitting as a component within larger models.
For example, we can directly use the (often nonlinear) underlying physical
and mathematical models rather than having to re-learn these within the
network. Importantly, this allows us to provide guarantees and enforce
hard constraints on representations within a model (for example, that a
rotation within a geometric model is valid or that a permutation
matrix is normalized correctly).
Furthermore, the approach is still applicable when no closed form solution exists,
allowing sophisticated approaches with non-differentiable steps (such as
RANSAC~\cite{Fischler:1981}) to be used internally.
Global end-to-end learning of network parameters is still possible even in this case.

Since this is a new approach, there are still several challenges to be
addressed. We discuss some of these in this paper but
leave many for future work by us and the community.
Some preliminary ideas relating to declarative networks appear in
previous works (discussed below) but, to the best of our knowledge,
this paper is the first to present a general coherent framework for
describing these models.

\section{Background and Related Work}
\label{sec:background}

The ability to differentiate through declarative nodes relies on the
implicit function theorem, which has a long history whose roots can be
traced back to works by Descartes, Leibniz,
Bernoulli and Euler~\cite{Scarpello:2002}. It was Cauchy who
first placed the theorem on rigorous mathematical grounds
and Dini who first presented
the theorem in its modern multi-variate form~\cite{Krantz:2013, Dontchev:2014}.
Roughly speaking, the theorem
states conditions under which the derivative of a variable $y$ with
respect to another variable $x$ exists for implicitly defined functions,
$f(x, y) = 0$, and provides a means for computing the
derivative of $y$ with respect to $x$ when it does exist.

In the context of deep learning, it is the derivative of the output of
a (declarative) node with respect to its input that facilitates
end-to-end parameter learning by
back-propagation~\cite{Rumelhart:Nature86, Lecun:Nature15}. In this
sense the learning problem is formulated as an optimization problem on
a given error metric or regularized loss function. When declarative nodes appear in
the network, computing the network output and hence the loss function
itself requires solving an inner optimization problem. Formally, we
can think of the learning problem as an upper optimization problem and
the network output as being obtained from a lower optimization problem within a bi-level
optimization framework~\cite{Stackelberg:2011, Bard:1998}.

Bi-level optimization (and the need for implicit differentiation) has
appeared in various settings in the machine learning literature, most
notably for the problem of meta-learning.
For example, \citename{Do:NIPS07} consider the problem of determining
regularization parameters for log-linear models formulated as a
bi-level optimization problem. \citename{Domke:AISTATS12} addresses
the problem of
learning parameters of continuous energy-based models wherein inference
(equivalent to the forward pass in a neural network) requires finding
the minimizer of a so-called energy function. The resulting learning problem
is then necessarily bi-level. Last, \citename{Klatzer:CVWW2015} propose
a bi-level optimization framework for choosing hyper-parameter settings
for support vector machines that avoids the need for cross-validation.

In computer vision and image processing, bi-level optimization has
been used to formulate solutions to pixel-labeling problems.
\citename{Samuel:CVPR09} propose learning parameters of a continuous
Markov random field with bi-level optimization and apply their technique
to image denoising and in-painting. The approach is a special case of
energy-based model learning~\cite{Domke:AISTATS12}. \citename{Ochs:2015}
extend bi-level optimization to handle non-smooth
lower-level problems and apply their method to image segmentation tasks.

Recent works have started to consider the question of placing specific
optimization problems within deep learning models~\cite{Gould:TR2016,
Amos:ICML2017, Chen:NIPS2018, Tschiatschek:NIPS2018, Wang:ICML2019}.
These approaches can be thought of as
establishing the groundwork for DDNs by developing specific declarative
components. \citename{Gould:TR2016} summarize some general results for
differentiating through unconstrained, linearly constrained and
inequality constrained optimization problems. In the case of
unconstrained and equality constrained problems the results are exact
whereas for inequality constrained problems they make use of an
interior-point approximation. We extend these results to the case of
exact differentiation for problems with nonlinear equality and
inequality constraints.

\citename{Amos:ICML2017} also show how to differentiate through an
optimization problem, for the specific case of quadratic
programs (QPs). A full account of the work, including discussion of
more general cone programs, appears in \citename{Amos:PhD2019}.
Along the same lines \citename{Agrawal:TR2019} report results for
efficiently differentiating through cone programs with millions of parameters.
In both cases (quadratic programs and cone programs) the problems are convex and
efficient algorithms exist for finding the minimum. We make no restriction on
convexity for declarative nodes but still assume that an
algorithm exists for evaluating them in the forward pass. Moreover,
our paper establishes a unified framework for viewing these works
in an end-to-end learnable model---the deep declarative network (DDN).

Other works consider the problem of differentiating through discrete
submodular problems~\cite{Djolonga:NIPS2017, Tschiatschek:NIPS2018}. These
problems have the nice property that minimizing a convex relaxation still
results in the optimal solution to the submodular minimization problem,
which allows the derivative to be computed~\cite{Djolonga:NIPS2017}.
For submodular maximization there exist polynomial time algorithms
to find approximate solutions. Smoothing of these solutions results
in a differentiable model as demonstrated in~\citename{Tschiatschek:NIPS2018}.

Close in spirit to the idea of a deep declarative network is the
recently proposed SATNet~\cite{Wang:ICML2019}. Here MAXSAT problems are approximated by
solving a semi-definite program (SDP), which is differentiable. A fast solver allows
the problem to be evaluated efficiently in the forward pass and thanks to implicit
differentiation there is no need to explicitly unroll the optimization procedure,
and hence store Jacobians, making it memory and time efficient in the backward pass.
The method is applied to solving Sudoku problems presented as images, which requires
that the learned network encode logical constraints.

Another interesting class of models that fit into the deep declarative networks
framework are deep equilibrium models recently proposed by \citename{Bai:NIPS2019}. Here the model executes a sequence of fixed-point iterations $y^{(t)} = f(x, y^{(t-1)})$ until convergence in the forward pass. \citename{Bai:NIPS2019} show that rather than back-propagating through the unbounded sequence of fixed-point iterations, the derivative of the solution $y$ with respect to input $x$ can be computed directly via the implicit function theorem by observing that $y$ satisfies implicit function $f(x, y) - y = 0$.

\citename{Chen:NIPS2018} show how to differentiate through ordinary differential equation (ODE) initial value problems at some time $T$ using the adjoint sensitivity method, 
and view residual networks \cite{He:CVPR2016} as a discretization of such problems.
This approach can be interpreted as solving a feasibility problem with an
integral constraint function, and hence an exotic type of declarative node. The
requisite gradients in the backward pass are computed elegantly by solving a second
augmented ODE. While such problems can be included within our declarative framework,
in this paper we focus on results for twice-differentiable objective and constraints
functions that can be expressed in closed form.

There have also been several works that have proposed differentiable
components based on optimization problems for addressing specific
tasks. In the context of video classification,
\citename{Fernando:ICML2016, Fernando:IJCV2017} show how to
differentiate through a rank-pooling operator~\cite{Fernando:CVPR2015} within
a deep learning model, which
involves solving a support vector regression problem. The approach was
subsequently generalized to allow for a subspace representation of the video
and learning on a manifold~\cite{Cherian:CVPR2017}.

\citename{SantaCruz:PAMI2018} propose a deep learning model for
learning permutation matrices for visual attribute ranking and
comparison. Two variants are proposed both relaxing the permutation
matrix representation to a doubly-stochastic matrix
representation. The first variant involves iteratively normalizing
rows and columns to approximately project a positive matrix onto the
set of doubly-stochastic matrices. The second variant involves
formulating the projection as a quadratic problem and solving
exactly.

\citename{Lee:CVPR2019} consider the problem of few-shot learning
for visual recognition. They embed a differentiable QP~\cite{Amos:ICML2017} 
into a deep learning model that allows linear classifiers to be
trained as a basis for generalizing to new visual categories. Promising
results are achieved on standard benchmarks and they report low training
overhead in that solving the QP takes about the same time as
image feature extraction.

In the context of planning and control, \citename{Amos:NIPS2018} propose a
differentiable model predictive controller in a reinforcement learning setting.
They are able to demonstrate superior performance on classic pendulum and cartpole
problems.
De Avila Belbute-Peres et al.~\cite{Peres:NIPS2018}
show how to differentiate through physical
models, specifically, the optimal solution of a linear complementarity
problem, which allows a simulated physics environment to be placed within an
end-to-end learnable system.

Last, imposing hard constraints on the output of a deep neural network using
a Krylov subspace method was investigated in~\citename{Neila:TR2017}.
The approach is applied to the task of human pose estimation
and demonstrates the feasibility of training a very high-dimensional model that
enforces hard constraints, albeit without improving over a softly
constrained baseline.

\section{Notation}
\label{sec:notation}

Our results require differentiating vector-valued functions with
respect to vector arguments. To assist presentation we clarify
our notation here.
Consider the function $f(x, y, z)$ where both $y$ and $z$ are
themselves functions of $x$. We have
\begin{align}
\ddx{} f
&= \frac{\partial f}{\partial x} \ddx{x} +
\frac{\partial f}{\partial y} \ddx{y} +
\frac{\partial f}{\partial z} \ddx{z}
\label{eqn:scalar_notation}
\end{align}
by the chain rule of differentiation. For functions taking vector arguments, $f :
\reals^n \to \reals$, we write the derivative vector as
\begin{align}
\dd f &= \left[ \frac{\partial f}{\partial x_1}, \frac{\partial f}{\partial x_2}, \ldots, \frac{\partial f}{\partial x_n} \right] \in \reals^{1 \times n}.
\end{align}
For vector-valued functions $f: \reals \to \reals^m$ we write
\begin{align}
\dd f &= \left[ \ddx{f_1}, \ldots, \ddx{f_m} \right]\transpose \in \reals^{m \times 1}.
\end{align}

More generally, we define the derivative $\dd f$ of $f : \reals^n
\to \reals^m$ as an $m \times n$ matrix with entries
\begin{align}
\Big(\dd f(x)\Big)_{ij} &= \frac{\partial f_i}{\partial x_j}(x).
\end{align}
Then the chain rule for $h(x) = g(f(x))$ is simply
\begin{align}
\dd h(x) &= \dd g(f(x)) \dd f(x)
\end{align}
where the matrices automatically have the right dimensions and
standard matrix-vector multiplication applies. When taking partial
derivatives we use a subscript to denote the formal variable over
which the derivative is computed (the remaining variables fixed), for example
$\dd[X] f(x, y)$. For brevity we use the shorthand
$\dd[XY]^2 f$ to mean $\dd[X] (\dd[Y] f)\transpose$.

When no subscript is given for multi-variate functions we take $\dd$
to mean the total derivative with respect to the
independent variables. So, with $x$ as the independent variable,
the vector version of \eqnref{eqn:scalar_notation} becomes
\begin{align}
  \dd f &= \dd[X] f + \dd[Y] f \, \dd y + \dd[Z] f \, \dd z.
\end{align}

\section{Deep Declarative Networks}
\label{sec:ddn}

\begin{figure}
  \centering
  {\large \includegraphics[]{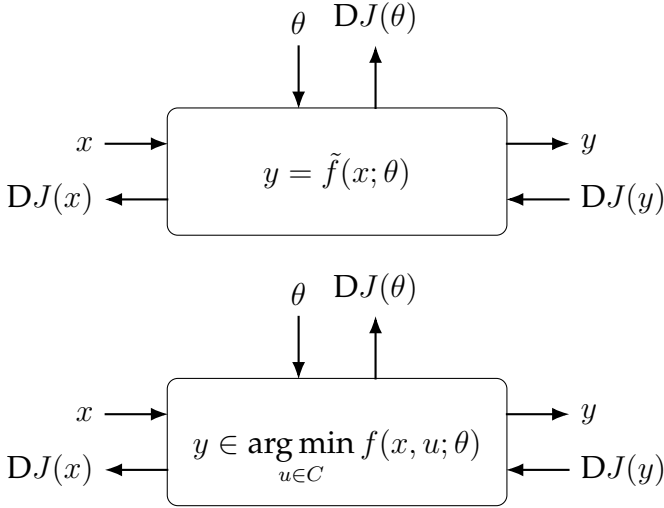}} 
  \caption{Parametrized data processing nodes in an end-to-end
    learnable model with global objective function $J$.
    During the forward evaluation pass of an
    imperative node (top) the input $x$ is transformed into output $y$ based
    on some explicit parametrized function $\tilde{f}(\cdot;
    \theta)$. During the forward evaluation pass of a declarative node (bottom)
    the output $y$ is computed as the minimizer of some parametrized
    objective function $f(x, \cdot; \theta)$. During the backward
    parameter update pass for either node type, the gradient of
    the global objective function with respect
    to the output $\dd J(y)$ is propagated backwards via the chain
    rule to produce gradients with respect to the input $\dd J(x)$ and
    parameters $\dd J(\theta)$.
  }
  \label{fig:ddn_nodes}
\end{figure}

We are concerned with data processing nodes in deep neural networks
whose output $y \in \reals^m$ is defined as the solution to a
constrained optimization problem parametrized by the input $x \in
\reals^n$. In the most general setting we have
\begin{align}
  y \in \argmin_{u \in C} f(x, u)
  \label{eqn:ddn_general}
\end{align}
where $f : \reals^n \times \reals^m \to \reals$ is an arbitrary
parametrized objective function and $C \subseteq \reals^m$ is an
arbitrary constraint set (that may also be parametrized by $x$).%
\footnote{In practical deep learning systems inputs and outputs are
  generally one or more multi-dimensional tensors. For clarity of exposition we consider
  inputs and outputs as single vectors without loss of generality.}

To distinguish such processing nodes from processing nodes (or layers)
found in conventional deep learning models, which specify an explicit
mapping from input to output, we refer to the former as declarative
nodes and the latter as imperative nodes, which we formally define as
follows.

\begin{definition}
  {\bf (Imperative Node).} An \emph{imperative} node is one
  in which the implementation of the forward processing function
  $\tilde{f}$ is explicitly defined. The output is then defined
  mathematically as
  \begin{align*}
    y &= \tilde{f}(x; \theta)
  \end{align*}
  where $\theta$ are the parameters of the node (possibly empty).
  \label{def:imperative_node}
\end{definition}

\begin{definition}
  {\bf (Declarative Node).} A \emph{declarative} node is one
  in which the exact implementation of the forward processing function
  is not defined; rather the input--output relationship ($x \mapsto y$)
  is defined in terms of behavior specified as the solution to an
  optimization problem
  \begin{align*}
    y \in \argmin_{u \in C} f(x, u; \theta)
  \end{align*}
  where $f$ is the (parametrized) objective function, $\theta$ are
  the node parameters (possibly empty), and $C$ is the set of feasible
  solutions, that is, constraints.
  \label{def:declarative_node}
\end{definition}

\figref{fig:ddn_nodes} is a schematic illustration of the
different types of nodes. Since nodes form part of an end-to-end
learnable model we are interested in propagating gradients backwards
through them. In the sequel we make no distinction between the
inputs and the parameters of a node as these are treated the same for the
purpose of computing gradients.

The definition of declarative nodes (Definition~\ref{def:declarative_node})
is very general and encompasses many subproblems that we would like embedded
within a network---robust fitting, projection onto a constraint
set, matching, optimal transport, physical simulations, etc.
Some declarative nodes may not be efficient to evaluate (that is, solve the
optimization problem specifying its behavior), and in some cases the gradient
may not exist (e.g., when the feasible set or output space is discrete).
Nevertheless, under certain conditions on the objective and constraints, covering
a wide range of problems or problem approximations (such as shown by
\citename{Wang:ICML2019} for MAXSAT), the declarative node is differentiable and can
be placed within an end-to-end learnable model as we discuss below.

\subsection{Learning}
\label{sec:learning}

We make no assumption about \emph{how} the optimal solution $y$ is
obtained in a declarative node, only that there exists some algorithm
for computing it.  The consequence of this assumption is that, when
performing back-propagation, we do not need to propagate gradients
through a complicated algorithmic procedure from output to
input. Rather we rely on implicit differentiation to directly compute
the gradient of the node's output with respect to its input (or
parameters). Within the context of end-to-end learning with respect to
some global objective (i.e.,~loss function) this becomes a bi-level
(multi-level) optimization problem~\cite{Bard:1998, Dempe:2015},
which was first studied in the context of two-player leader--follower
games by Stackelberg in the 1930s~\cite{Stackelberg:2011}. Formally, we
can write
\begin{align}
  \begin{array}{ll}
    \text{minimize} & J(x, y) \\
    \text{subject to} & y \in \argmin_{u \in C} f(x, u)
  \end{array}
  \label{eqn:bilevel}
\end{align}
where the minimization is over all tunable parameters in the network.
Here $J(x, y)$ may depend on $y$ through additional layers of
processing. Note also that $y$ is itself a function of $x$. As such
minimizing the objective via gradient descent requires the following
computation
\begin{align}
  \dd J(x, y) &= \dd[X] J(x, y) + \dd[Y] J(x, y) \dd y(x),
\end{align}
The key challenge for declarative nodes then is the calculation of
$\dd y(x)$, which we will discuss in \secref{sec:backprop}. A
schematic illustration of a bi-level optimization problem and
associated gradient calculations is shown in \figref{fig:bilevel_opt}.

\begin{figure}
	\centering
	{\includegraphics[]{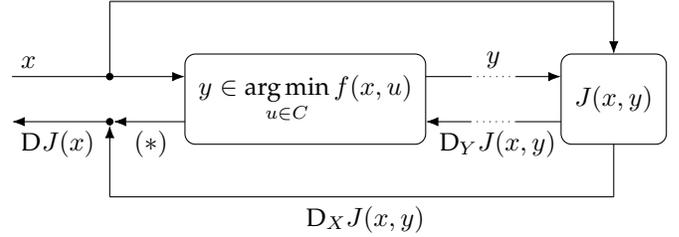}}
	\caption{Bi-level optimization problem showing back-propagation of gradients through
		a deep declarative node. The quantity $(*)$ is $\dd[Y]J(x, y) \dd y(x)$ which
		when added to $\dd[X] J(x, y)$ gives $\dd J(x, y(x))$. The bypass connections
		(topmost and bottommost paths) do not exist when the upper-level objective $J$
		only depends on $x$ through $y$. Moreover, if $f$ appears in $J$ as the only
		term involving $y$ then $\dd[Y] J(x, y)$ is zero and the backward edge $(*)$ is
		not required. That is, $\dd J(x) = \dd[X] J(x, y)$.}
	\label{fig:bilevel_opt}
\end{figure}

The higher-level objective $J$ is often defined as the sum of
various terms (including loss terms and regularization terms) and
decomposes over individual examples from a training dataset. In such cases
the gradient of $J$ with respect to model parameters also decomposes as the sum of gradients for losses on
each training example (and regularizers).

A simpler form of the gradient arises when the lower-level objective function $f$ appears in the upper-level objective $J$ as the only term involving $y$. Formally, if $J(x, y) = g(x, f(x, y))$ and
the lower-level problem is unconstrained (that is, $C = \reals^m$), then
\begin{align}
\begin{split}
\dd J(x, y) &= \dd[X\,] g(x, f) + \dd[F\,] g(x, f)\left( \dd f + \dd[Y\!] f \, \dd y\right) \\
&= \dd[X\,] g(x, f) + \dd[F\,] g(x, f) \dd f
\end{split}
\end{align}
since $\dd[Y] f(x, y) = 0$ by virtue of $y$ being an unconstrained minimum of $f(x, \cdot)$.
For example, if $J(x, y) = f(x, y)$, then $\dd J(x, y) = \dd[X] f(x, y)$.
This amounts to a variant of block coordinate descent on $x$ and $y$, where $y$ is
optimized fully during each step,
and in such cases we do not need to propagate gradients backwards through the
declarative node. The remainder of this paper is focused on the more general case
where $y$ appears in other terms in the loss function (possibly through composition with
other nodes) and hence calculation of $\dd y$ is required.

\subsection{Common sub-classes of declarative nodes}
\label{sec:common}

It will be useful to consider three very common cases of the general
setting presented in~\eqnref{eqn:ddn_general}, in increasing levels of
sophistication. The first is the unconstrained case,
\begin{align}
  y &\in \argmin_{u \in \reals^m} f(x, u)
  \label{eqn:dde_unconst}
\end{align}
where the objective function $f$ is minimized over the entire
$m$-dimensional output space. The second is when the feasible set is
defined by $p$ possibly nonlinear equality constraints,
\begin{align}
  \begin{array}{rllr}
    y \in& {\textstyle\argmin_{u \in \reals^m}} & f(x, u) \\
    & \text{subject to} & h_i(x, u) = 0, & i = 1, \ldots, p.
  \end{array}
  \label{eqn:dde_equ_const}
\end{align}
The third is when the feasible set is defined by $p$ equality and
$q$ inequality constraints,
\begin{align}
  \begin{array}{rlll}
    y \in& {\textstyle\argmin_{u \in \reals^m}} & f(x, u) \\
    & \text{subject to} & h_i(x, u) = 0, & i = 1, \ldots, p \\
    & & g_i(x, u) \leq 0, & i = 1, \ldots, q.
  \end{array}
\label{eqn:dde_inequ_const}
\end{align}

Note that in all cases we can think of $y$ as an implicit function of
$x$, and will often write $y(x)$ to make the input--output relationship clear.
We now show that, under mild conditions, the gradient of $y$ with
respect to $x$ can be computed for these and other special cases
without having to know anything about the algorithmic procedure used
to solve for $y$ in the first place.

In the sequel it will be useful to characterize solutions $y$ using the
concept of regularity due to \citename[\S 3.3.1]{Bertsekas:2004}.

\begin{definition}
	{\bf (Regular Point).} A feasible point $u$ is said to be \emph{regular} if
	the equality constraint gradients $\dd[U] h_i$ and the active inequality
	constraint gradients $\dd[U] g_i$ are linearly independent, or there are no
	equality constraints and the inequality constraints are all inactive at
	$u$.\footnote{An inequality constraint $g_i$ is active for a feasible point
		$u$ if $g_i(x, u) = 0$ and inactive if $g_i(x, u) < 0$.}
\end{definition}

\subsection{Back-propagation through declarative nodes}
\label{sec:backprop}

Our results for computing gradients are based on implicit
differentiation. We begin with the unconstrained case,
which has appeared in different guises in several previous works~\cite{Gould:TR2016}.
The result is a straightforward application of Dini's implicit function
theorem~\cite[p19]{Dontchev:2014} applied to the first-order optimality condition
$\dd[Y] f(x, y) = 0$.

\begin{proposition}[Unconstrained]
Consider a function $f: \reals^n \times \reals^m \to \reals$. Let
\begin{align*}
  y(x) &\in \argmin_{u \in \reals^m} f(x, u).
\end{align*}
Assume $y(x)$ exists and that $f$ is second-order differentiable
in the neighborhood of the point $(x, y(x))$.
Set $H = \dd[YY]^2 f(x, y(x)) \in \reals^{m \times m}$ and $B = \dd[XY]^2 f(x, y(x)) \in \reals^{m \times n}$.
Then for $H$ non-singular the derivative of $y$ with respect to $x$ is
\begin{align*}
\dd y(x) &= -H^{-1} B.
\end{align*}
\label{prop:unconstrained}
\end{proposition}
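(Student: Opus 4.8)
The plan is to realize $y(\cdot)$ as the implicit solution of a stationarity equation and then invoke Dini's implicit function theorem, exactly as the text anticipates. Since $y(x)$ is an unconstrained minimizer of the differentiable function $f(x,\cdot)$, it satisfies the first-order optimality condition $\dd[Y] f(x, y(x)) = 0$. The idea is to package this as $F(x, u) = 0$ with $F(x, u) = \big(\dd[Y] f(x, u)\big)\transpose \in \reals^m$, and then differentiate the identity $F(x, y(x)) = 0$ to extract $\dd y(x)$.

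First I would check the hypotheses of the implicit function theorem at $(x, y(x))$: we have $F(x, y(x)) = 0$ by stationarity, and $F$ is continuously differentiable near $(x, y(x))$ since $f$ is twice differentiable there. The partial Jacobian of $F$ in its second argument is $\dd[Y] F(x, u) = \dd[YY]^2 f(x, u)$, which equals $H$ at the point and is non-singular by assumption. The implicit function theorem therefore supplies a neighborhood of $x$ on which there is a unique continuously differentiable map $u(\cdot)$ with $u(x) = y(x)$ and $F(x', u(x')) = 0$ throughout. Moreover, since the second-order necessary conditions make $H$ positive semidefinite at a minimizer, non-singularity makes $H$ positive definite, so $y(x)$ is a strict local minimizer; the implicit branch then coincides locally with the minimizer selection, and $y(\cdot)$ is differentiable at $x$.

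Next I would differentiate $F(x', y(x')) = 0$ with respect to $x'$ using the total-derivative chain rule of Section~\ref{sec:notation}, and evaluate at $x' = x$:
\[
  \dd[X] F(x, y(x)) + \dd[Y] F(x, y(x))\,\dd y(x) = 0 .
\]
By the shorthands of Section~\ref{sec:notation}, $\dd[X] F(x, y(x)) = \dd[X]\big(\dd[Y] f\big)\transpose = \dd[XY]^2 f(x, y(x)) = B$ and $\dd[Y] F(x, y(x)) = \dd[YY]^2 f(x, y(x)) = H$. Hence $H\,\dd y(x) = -B$, and left-multiplying by $H^{-1}$ (which exists by hypothesis) gives $\dd y(x) = -H^{-1} B$, with the dimensions $m\times m$, $m\times n$ and $m\times n$ lining up as required.

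The main thing that needs care — rather than a genuine obstacle — is the step that upgrades the single stationarity equation at $(x, y(x))$ to a differentiable curve $x' \mapsto y(x')$ of stationary points along which we may differentiate; this is precisely what non-singularity of $H$ buys us through the implicit function theorem, and is the reason that hypothesis is imposed. The only other bookkeeping to watch is the transpose convention in the definitions of $H$, $B$, and $F$, which determines whether one lands on $-H^{-1}B$ or its transpose.
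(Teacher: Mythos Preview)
Your proposal is correct and follows essentially the same approach as the paper: apply the first-order optimality condition $\dd[Y] f(x,y(x)) = 0$, then differentiate this identity in $x$ via the implicit function theorem to obtain $H\,\dd y(x) + B = 0$ and solve. You are somewhat more careful than the paper in explicitly verifying the implicit function theorem hypotheses and arguing that the implicit branch agrees with the minimizer selection, but the core argument is identical.
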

\vspace{-8mm}
\begin{proof}
For any optimal $y$, the first-order optimality condition requires
$\dd[Y] f(x, y) = \zeros_{1 \times m}$. The result then follows from
the implicit function theorem: transposing and differentiating both sides
with respect to $x$ we have
\begin{align}
\begin{split}
\zeros_{m \times n}
&= \dd \left( \dd[Y] f(x, y) \right)\transpose \\
&= \dd[XY]^2 f(x, y) + \dd[YY]^2 f(x, y) \dd y(x)
\end{split}
\end{align}
which can be rearranged to give
\begin{align}
\dd y(x) &= -\left(\dd[YY]^2 f(x, y)\right)^{-1} \dd[XY]^2 f(x, y)
\end{align}
when $\dd[YY]^2 f(x,y)$ is non-singular.
\end{proof}

In fact, the result above holds for any stationary point of $f(x, \cdot)$ not
just minima. However, for declarative nodes it is the minima that are of
interest. Our next result gives the gradient for equality constrained declarative
nodes, making use of the fact that the optimal solution of a constrained
optimization problem is a stationary point of the Lagrangian associated
with the problem.

\begin{proposition}[Equality Constrained]
  Consider functions $f: \reals^n \times \reals^m \to \reals$ and $h: \reals^n \times
  \reals^m \to \reals^p$. Let
  \begin{align*}
  	\arraycolsep=2pt
    \begin{array}{rllr}
      y(x) \in& {\textstyle\argmin_{u \in \reals^m}} & f(x, u) &\\
      & \text{subject to} & h_i(x, u) = 0, & i = 1, \ldots, p.
    \end{array}
  \end{align*}
  Assume that $y(x)$ exists, that $f$ and $h = [h_1,\dots,h_p]\transpose$ are
  second-order differentiable in the neighborhood of $(x, y(x))$,
  and that $\text{rank}(\dd[Y] h(x, y)) = p$.
  Then for $H$ non-singular
  \begin{align*}
    \dd y(x) &= H^{-1} \! A\transpose \!\! \left( A H^{-1} \! A\transpose \right)^{\!-1} \!\!\left(A H^{-1} B - C\right)  - H^{-1} B
  \end{align*}
  where
  \begin{align*}
    A &= \dd[Y] h(x,y) \in \reals^{p \times m}\\
    B &= \dd[XY]^2 f(x, y) - \sum_{i = 1}^{p} \lambda_i \dd[XY]^2 h_i(x, y) \in \reals^{m \times n}\\
    C &= \dd[X] h(x,y) \in \reals^{p \times n}\\
    H &= \dd[YY]^2 f(x, y) - \sum_{i = 1}^{p} \lambda_i \dd[YY]^2 h_i(x, y)  \in \reals^{m \times m}
  \end{align*}
  and $\lambda \in \reals^p$ satisfies $\lambda\transpose \!A = \dd[Y] f(x, y)$.
  \label{prop:main}
\end{proposition}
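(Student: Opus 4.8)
The plan is to characterize $y(x)$ through the first-order (KKT) optimality conditions of the lower-level problem and then apply implicit differentiation to that system, mirroring the proof of Proposition~\ref{prop:unconstrained} but with the Lagrangian of the problem playing the role of the objective. Since $\text{rank}(\dd[Y] h(x,y)) = p$, the minimizer $y$ is a regular point, so the first-order necessary conditions supply a multiplier $\lambda \in \reals^p$ with $\dd[Y] f(x,y) - \lambda\transpose \dd[Y] h(x,y) = \zeros_{1 \times m}$; as $A = \dd[Y] h$ has full row rank this $\lambda$ is unique and coincides with the one in the statement. Together with primal feasibility $h(x,y) = \zeros_{p}$ this gives $m+p$ equations in the $m+p$ unknowns $(y,\lambda)$ holding identically in $x$ near the point of interest.

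First I would establish that $y(\cdot)$ and $\lambda(\cdot)$ are differentiable there. Writing the stationarity-and-feasibility map as $G(x,u,\mu) = \big( (\dd[Y] f(x,u) - \mu\transpose \dd[Y] h(x,u))\transpose, \; h(x,u) \big)$, its Jacobian with respect to $(u,\mu)$ at $(x,y,\lambda)$ is the KKT matrix $\left[ \begin{smallmatrix} H & -A\transpose \\ A & 0 \end{smallmatrix} \right]$, whose determinant equals $\det(H)\,\det(A H^{-1} A\transpose)$ by a Schur-complement computation. Hence for $H$ nonsingular --- and, implicitly, for $A H^{-1} A\transpose$ nonsingular, since the claimed formula inverts it --- the implicit function theorem applies and yields locally differentiable $y(x)$ and $\lambda(x)$.

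Next I would differentiate the two identities totally with respect to $x$, with everything evaluated at $(x, y(x))$ and $\lambda = \lambda(x)$. Differentiating $h(x, y(x)) = \zeros_{p}$ gives $C + A\,\dd y(x) = \zeros_{p \times n}$. Differentiating the transposed stationarity condition $(\dd[Y] f)\transpose - \sum_{i=1}^{p} \lambda_i (\dd[Y] h_i)\transpose = \zeros_{m}$ and collecting terms --- using $\dd\big( (\dd[Y] f)\transpose \big) = \dd[XY]^2 f + \dd[YY]^2 f\, \dd y(x)$ and analogously for each $h_i$, and keeping the term $\sum_i (\dd[Y] h_i)\transpose \dd\lambda_i = A\transpose \dd\lambda(x)$ produced by the dependence of $\lambda$ on $x$ --- yields $B + H\,\dd y(x) - A\transpose \dd\lambda(x) = \zeros_{m \times n}$, with $B$ and $H$ exactly the Lagrangian cross-derivative and Hessian appearing in the statement. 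This differentiation-and-bookkeeping step, carried out while keeping the sign convention ($L = f - \lambda\transpose h$, which is why $B$ and $H$ carry the minus signs) consistent throughout, is where I expect essentially all the difficulty to lie; the remainder is linear algebra.

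Finally I would solve the linear system $H\,\dd y(x) - A\transpose \dd\lambda(x) = -B$ and $A\,\dd y(x) = -C$ by block elimination: the first equation gives $\dd y(x) = H^{-1} A\transpose \dd\lambda(x) - H^{-1} B$; substituting into the second gives $A H^{-1} A\transpose \dd\lambda(x) = A H^{-1} B - C$, hence $\dd\lambda(x) = (A H^{-1} A\transpose)^{-1}(A H^{-1} B - C)$; and back-substituting recovers exactly $\dd y(x) = H^{-1} A\transpose (A H^{-1} A\transpose)^{-1}(A H^{-1} B - C) - H^{-1} B$, as claimed.
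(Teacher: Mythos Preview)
Your proposal is correct and follows essentially the same route as the paper: form the Lagrangian $f - \lambda\transpose h$, invoke regularity to obtain the multiplier, differentiate the stacked stationarity/feasibility identities in $x$, and solve the resulting $2\times 2$ block system by eliminating $\dd\lambda$. The only addition is your explicit appeal to the implicit function theorem via the Schur-complement determinant to justify differentiability of $(y,\lambda)$; the paper simply differentiates the KKT identities directly without isolating that step.
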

\begin{proof}
  By the method of Lagrange multipliers~\cite{Bertsekas:1982} we can form the Lagrangian
  \begin{align}
    \Ell(x, y, \lambda) &= f(x, y) - \sum_{i = 1}^{p} \lambda_i h_i(x, y).
  \end{align}
  Assume $y$ is optimal for a fixed input $x$. Since $\dd[Y] h(x, y)$ is
  full rank we have that $y$ is a regular point. Then there exists a
  $\lambda$ such that the Lagrangian is stationary at the point $(y,
  \lambda)$. Here both $y$ and $\lambda$ are understood to be functions
  of $x$. Thus
  \begin{align}
    \begin{bmatrix}
      \left( \dd[Y] f(x, y) - \sum_{i = 1}^{p} \lambda_i \dd[Y] h_i(x, y) \right)\transpose \\
      h(x, y)
    \end{bmatrix}
    &= \zeros_{m+p}
    \label{eqn:lagrange_gradient}
  \end{align}
  where the first $m$ rows are from differentiating $\Ell$ with
  respect to $y$ (that is, $\dd[Y] \Ell\transpose$) and the last $p$ rows are
  from differentiating $\Ell$ with respect to $\lambda$ (that is,
  $\dd[\Lambda] \Ell\transpose$).
  
  Now observe that at the optimal point $y$ we have that either
  $\dd[Y] f(x, y) = \zeros_{1 \times m}$, that is, the optimal point
  of the unconstrained problem automatically satisfies the
  constraints, or $\dd[Y] f(x, y)$ is non-zero and orthogonal to the
  constraint surface defined by $h(x, y) = 0$. In the first case we
  can simply set $\lambda = \zeros_{p}$. In the second case we have
  (from the first row in \eqnref{eqn:lagrange_gradient})
  \begin{align}
    \dd[Y] f(x, y) &= \sum_{i = 1}^{p} \lambda_i \dd[Y] h_i(x, y) = \lambda\transpose \! A
  \end{align}
  for $A$ defined above.
  
  Now, differentiating the gradient of the Lagrangian with respect to $x$ we have
  \begin{align}
    \dd
    \begin{bmatrix}
      (\dd[Y] f(x, y))\transpose - \sum_{i = 1}^{p} \lambda_i (\dd[Y] h_i(x, y))\transpose \\
      h(x, y) \\
    \end{bmatrix}
    = \zeros
  \end{align}
  For the first row this gives
  \begin{align}
    &\dd[XY]^2 f + \dd[YY]^2 f \dd y - \dd[Y] h\transpose \dd \lambda \nonumber\\
    & \qquad\quad - \sum_{i=1}^{p}\lambda_i\left( \dd[XY]^2 h_i + \dd[YY]^2 h_i \dd y\right) = \zeros_{m \times n}
  \end{align}
  and for the second row we have
  \begin{align}
    \dd[X] h + \dd[Y] h \dd y &= \zeros_{p \times n}.
    \label{eqn:Dh}
  \end{align}
  Therefore
  \begin{align}
    &\begin{bmatrix}
       \dd[YY]^2 f - \sum_{i = 1}^{p}\lambda_i \dd[YY]^2 h_i & -\dd[Y] h\transpose
       \\
       \dd[Y] h & \zeros_{p \times p}
     \end{bmatrix}
    \begin{bmatrix}
      \dd y \\
      \dd \lambda
    \end{bmatrix} \nonumber\\
    &\qquad\quad +
    \begin{bmatrix}
      \dd[XY]^2 f - \sum_{i = 1}^{p}\lambda_i \dd[XY]^2 h_i
      \\
      \dd[X] h
    \end{bmatrix}
    = \zeros_{(m+p) \times n}
  \end{align}
  where all functions are evaluated at $(x, y)$.
  We can now solve by variable elimination \cite{Boyd:2004} to get
  \begin{align}
    \dd \lambda(x) &=
    \left(A H^{-1} A\transpose \right)^{-1} \left( A H^{-1}B - C\right)
    \\
    \dd y(x) &=
    H^{-1} \! A\transpose \!\! \left( A H^{-1} \! A\transpose \right)^{\!-1} \!\!\left(A H^{-1} B - C\right)  - H^{-1} B
  \end{align}
  with $A$, $B$, $C$, and $H$ as defined above.
\end{proof}

The Lagrange multipliers $\lambda$ in \propref{prop:main} are often
made available by the method used to solve for $y(x)$, e.g., in the
case of primal-dual methods for convex optimization problems. Where
it is not provided by the solver it can be computed explicitly solving
$\lambda\transpose \!A = \dd[Y] f(x, y)$, which has unique analytic
solution
\begin{align}
  \lambda &= (AA\transpose)^{-1}A\,(\dd[Y] f)\transpose.
\end{align}
Thus given optimal $y$ we can find $\lambda$.

Although the result looks expensive to compute, much of the
computation is shared since $H^{-1}A\transpose$ is independent of the
coordinate of $x$ for which the gradient is being computed. Moreover,
$H$ need only be factored once and then reused in computing $H^{-1}B$ for
each input dimension, that is, column of $B$. And for many problems
$H$ has structure that can be further exploited to speed calculation
of $\dd y$.

Inequality constrained problems encompass a richer class of deep
declarative nodes. \citename{Gould:TR2016} addressed the question of
computing gradients for these problems by approximating the inequality
constraints with a logarithmic barrier~\cite{Boyd:2004} and leveraging
the result for unconstrained problems. However, it is also possible to
calculate the gradient without resorting to approximation by
leveraging the KKT optimality conditions~\cite{Bertsekas:2004}
as has been shown in previous work for the special case
of convex optimization problems~\cite{Amos:ICML2017, Agrawal:TR2019}.
Here we present a result for the more general non-convex case.

We first observe that for a smooth objective function $f$ the optimal
solution $y(x)$ to the inequality constrained problem in \eqnref{eqn:dde_inequ_const}
is also a local (possibly global) minimum of the problem with any or all inactive
inequality constraints removed. Moreover, the derivative $\dd y(x)$ is the same
for both problems. As such, without loss of generality, we can assume that all
inequality constraints are active.

\begin{proposition}[Inequality Constrained]
	Consider functions $f: \reals^n \times \reals^m \to \reals$, $h: \reals^n \times
	\reals^m \to \reals^p$ and $g: \reals^n \times \reals^m \to \reals^q$. Let
	\begin{align*}
	\arraycolsep=2pt
	\begin{array}{rlll}
	y(x) \in& {\textstyle\argmin_{u \in \reals^m}} & f(x, u) &\\
	& \text{subject to} & h_i(x, u) = 0, & i = 1, \ldots, p \\
	& & g_i(x, u) \leq 0, & i = 1, \ldots, q.
	\end{array}
	\end{align*}
	Assume that $y(x)$ exists, that $f$, $h$ and $g$ are second-order 
	differentiable in the neighborhood of $(x, y(x))$,
	and that all inequality constraints are active at $y(x)$. Let
	$\tilde{h} = [h_1, \ldots, h_p, g_1, \ldots, g_q]$
	and assume $\text{rank}(\dd[Y] \tilde{h}(x, y)) = p + q$.
	Then for $H$ non-singular
	\begin{align*}
	\dd y(x) &= H^{-1} \! A\transpose \!\! \left( A H^{-1} \! A\transpose \right)^{\!-1} \!\!\left(A H^{-1} B - C\right)  - H^{-1} B
	\end{align*}
	where
	\begin{align*}
	A &= \dd[Y] \tilde{h}(x, y) \in \reals^{(p+q) \times m}\\
	B &= \dd[XY]^2 f(x, y) - \sum_{i = 1}^{p + q} \lambda_i \dd[XY]^2 \tilde{h}_i(x, y) \in \reals^{m \times n}\\
	C &= \dd[X] \tilde{h}(x,y) \in \reals^{(p+q) \times n}\\
	H &= \dd[YY]^2 f(x, y) - \sum_{i = 1}^{p + q} \lambda_i \dd[YY]^2 \tilde{h}_i(x, y)  \in \reals^{m \times m}
	\end{align*}
	and $\lambda \in \reals^{p+q}$ satisfies $\lambda\transpose A = \dd[Y] f(x, y)$ with
	$\lambda_i \leq 0$ for $i = p+1, \ldots, p+q$. The gradient $\dd y(x)$ is one-sided
        whenever there exists an $i > p$ such that $\lambda_i = 0$.
	\label{prop:main_ineq}
\end{proposition}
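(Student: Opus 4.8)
The plan is to reduce the inequality-constrained problem to the equality-constrained one already treated in \propref{prop:main}, exploiting the standing assumption that all inequality constraints are active at $y(x)$. First I would invoke the KKT optimality conditions~\cite{Bertsekas:2004}: since $y(x)$ is feasible and, by the rank hypothesis $\text{rank}(\dd[Y]\tilde{h}(x,y)) = p+q$, a regular point, there is a unique multiplier vector $\lambda \in \reals^{p+q}$ such that the gradient of the Lagrangian
\begin{align*}
\Ell(x, y, \lambda) = f(x, y) - {\textstyle\sum_{i=1}^{p+q}} \lambda_i \tilde{h}_i(x, y)
\end{align*}
vanishes in $y$, i.e.\ $\dd[Y] f(x, y) = \lambda\transpose A$, together with dual feasibility $\lambda_i \leq 0$ for $i = p+1, \ldots, p+q$ (the sign coming from the convention $g_i \leq 0$) and complementary slackness, which is automatic here because every $g_i$ is active, that is $g_i(x, y) = 0$.

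Second, having folded the active inequalities into $\tilde{h}$, the stationarity condition $\dd[Y]\Ell\transpose = \zeros_m$ and the feasibility condition $\tilde{h}(x, y) = \zeros_{p+q}$ are precisely the system analysed in the proof of \propref{prop:main}, with $h$ replaced by $\tilde{h}$ and $p$ by $p+q$. I would therefore differentiate both conditions with respect to $x$, obtain the $(m+p+q) \times (m+p+q)$ block linear system in $(\dd y, \dd \lambda)$ with blocks $H$, $-A\transpose$, $A$, $\zeros$, and eliminate $\dd\lambda$~\cite{Boyd:2004} to arrive at the stated closed form for $\dd y(x)$, valid whenever $H$ is non-singular. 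This part is essentially a transcription of the equality-constrained argument; the only genuinely new ingredient is the sign restriction on the inequality multipliers, which is needed for the KKT characterisation but does not enter the linear algebra. I would also recall the remark preceding the proposition — that inactive inequalities can be deleted without changing $\dd y(x)$ — to justify that assuming all inequalities active is without loss of generality.

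The delicate point, and the main obstacle, is the final claim that $\dd y(x)$ is only one-sided when $\lambda_i = 0$ for some $i > p$, i.e.\ when strict complementarity fails. The reduction above tacitly assumes the active set is locally constant, which the implicit function theorem guarantees only under strict complementarity: if $\lambda_i = 0$ for an active $g_i$, then perturbing $x$ may keep $g_i$ active (so constraint $i$ should be retained in $\tilde h$) or drive $g_i$ strictly negative (so it should be dropped). I would argue that in the first case the formula above, with row $i$ kept in $A$ and $C$ and its term kept in $B$ and $H$, gives the correct directional derivative, whereas in the second case the reduced formula — with those rows and terms removed — applies; since the two expressions need not agree, the two-sided derivative fails to exist and only one-sided (directional) derivatives survive. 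Making this precise, e.g.\ by a parametric sensitivity argument along feasible curves $x(t)$ and checking which side preserves activity of $g_i$, is where the real work lies; crucially, the one-sided derivative is still enough to supply a valid descent direction for the upper-level objective $J$, so end-to-end learning proceeds.
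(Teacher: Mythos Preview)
Your proposal is correct and mirrors the paper's own proof: invoke KKT at the regular point $y(x)$ to obtain the signed multipliers, fold the active inequalities into $\tilde h$, and then quote the equality-constrained derivation of \propref{prop:main} verbatim. The only cosmetic difference is in the one-sidedness argument --- the paper attributes it tersely to $\lambda_i(x)$ being non-differentiable at zero, whereas you spell out the equivalent active-set-stability/strict-complementarity picture; your version is more detailed but the content is the same.
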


\begin{proof}
  The existence of $\lambda$ and non-negativity of $\lambda_i$ for $i = p+1, \ldots, p+q$ comes
  from the fact that $y(x)$ is a regular point (because $\dd[Y] \tilde{h}(x, y)$ is full rank).
  The remainder of the proof follows \propref{prop:main}. One-sidedness of $\dd y(x)$ results
  from the fact that $\lambda_i(x)$, for $i = p+1, \ldots, p+q$, is not differentiable at zero.
\end{proof}

The main difference between the result for inequality constrained
problems (\propref{prop:main_ineq}) and the simpler equality
constrained case (\propref{prop:main}) is that the gradient is
discontinuous at points where any of the Lagrange multipliers for
(active) inequality constraints are zero. We illustrate this by
analysing a deep declarative node with a single inequality constraint,
\begin{align}
\arraycolsep=2pt
\begin{array}{rll}
y \in & \argmin_{u \in \reals^m} & f(x, u) \\
& \text{subject to} & g(x, u) \leq 0
\end{array}
\label{eqn:dde_ineq_simple}
\end{align}
where both $f$ and $g$ are smooth.

Consider three scenarios (see \figref{fig:ineqconst}).
First, if the constraint is inactive at solution $y$, that is, $g(x, y) < 0$,
then we must have $\dd[Y] f(x, y) = 0$ and can
take $\dd y(x)$ to be the same as for the unconstrained case. Second, if the
constraint is active at the solution, that is, $g(x, y) = 0$ but $\dd[Y] f(x, y) \neq 0$
then in must be that $\lambda \neq 0$, and from \propref{prop:main_ineq}, the gradient $\dd y(x)$ is the same as if
we had replaced the inequality constraint with an equality.
Last, if the constraint is active at $y$ and $\dd[Y] f(x, y) = 0$ then $\dd y(x)$ is
undefined. In this last scenario we can choose either the unconstrained or
constrained gradient in order to obtain an update direction for back-propagation
similar to how functions such as rectified linear units are treated in standard
deep learning models.

\begin{figure}
	\centering
	\includegraphics[]{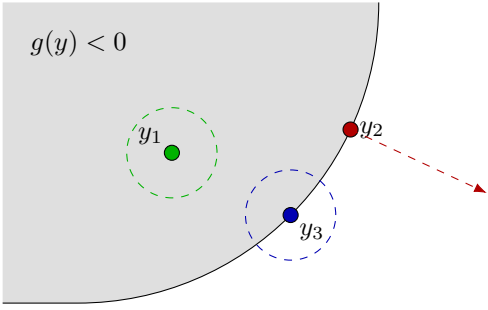}
	\caption{
		Illustration of different scenarios for the solution to inequality constrained deep declarative nodes.
		In the first scenario ($y_1$) the solution is a local minimum strictly satisfying the constraints. In
		the second scenario ($y_2$) the solution is on the boundary of the constraint set with the negative
		gradient of the objective pointing outside of the set. In the third scenario ($y_3$) the solution
		is on the boundary of the constraint set and is also a local minimum.}
	\label{fig:ineqconst}
\end{figure}

\subsection{Simpler equality constraints}
\label{sec:corrolaries}

Often there is only a single fixed equality constraint that does not
depend on the inputs---we consider such problems in
\secref{sec:projection}---or the equality constraints are all linear.
The above result can be specialized for these cases.

\begin{corollary}
Consider functions $f: \reals^n \times \reals^m \to \reals$ and $h: \reals^m \to \reals$. Let
\begin{align*}
\arraycolsep=2pt
\begin{array}{rll}
  y(x) \in& \argmin_{u \in \reals^m} & f(x, u) \\
  & \text{subject to} & h(u) = 0.
\end{array}
\end{align*}
Assume that $y(x)$ exists, that $f(x, u)$ and $h(u)$ are
second-order differentiable in the neighborhood of $(x, y(x))$
and $y(x)$, respectively,
and that $\dd[Y] h(y(x)) \neq 0$.
Then
\begin{align*}
\dd y(x) &= \left(\frac{H^{-1}a a\transpose H^{-1}}{a\transpose H^{-1}a} - H^{-1}\right) B
\end{align*}
where
\begin{align*}
a &= (\dd[Y] h(y))\transpose \in \reals^m, \\
B &= \dd[XY]^2 f(x, y) \in \reals^{m \times n}, \\
H &= \dd[YY]^2 f(x, y) - \lambda \dd[YY]^2 h(y) \in \reals^{m \times m}, \text{ and} \\
\lambda &= \left( \frac{\partial h}{\partial y_i}(y)\right) ^{-1} \frac{\partial f}{\partial y_i}(x, y) \in \reals \text{ for any } \frac{\partial h}{\partial y_i}(y) \neq 0.
\end{align*}
\label{cor:indep_x}
\end{corollary}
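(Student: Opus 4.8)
The plan is to obtain Corollary~\ref{cor:indep_x} as a direct specialization of Proposition~\ref{prop:main} to the case $p = 1$ in which the single equality constraint $h$ does not depend on the input $x$. First I would check that the hypotheses of Proposition~\ref{prop:main} are in force: $f$ and $h$ are second-order differentiable in a neighborhood of $(x, y(x))$ by assumption, $y(x)$ exists by assumption, $H$ is non-singular by assumption, and the rank condition $\text{rank}(\dd[Y] h(x,y)) = p$ reduces, for $p = 1$, to the statement that the row vector $\dd[Y] h(y)$ is nonzero — which is exactly the stated hypothesis $\dd[Y] h(y(x)) \neq 0$. Viewing $h$ formally as a function on $\reals^n \times \reals^m$ that happens to be constant in its first argument, we additionally get $\dd[X] h = \zeros_{1 \times m}$ and hence $\dd[XY]^2 h = \zeros_{m \times n}$.

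Next I would substitute these simplifications into the quantities $A$, $B$, $C$, $H$, $\lambda$ of Proposition~\ref{prop:main}. Writing $a = (\dd[Y] h(y))\transpose \in \reals^m$, we have $A = a\transpose \in \reals^{1 \times m}$, so $A\transpose = a$; the sum $\sum_i \lambda_i \dd[XY]^2 h_i$ vanishes, leaving $B = \dd[XY]^2 f(x, y)$; the matrix $C = \dd[X] h = \zeros_{1 \times n}$ disappears; and $H = \dd[YY]^2 f(x, y) - \lambda \dd[YY]^2 h(y)$ with the scalar multiplier $\lambda$ pinned down by the $1 \times m$ equation $\lambda a\transpose = \dd[Y] f(x, y)$, i.e.\ $\lambda = \left(\frac{\partial h}{\partial y_i}(y)\right)^{-1} \frac{\partial f}{\partial y_i}(x, y)$ for any coordinate $i$ with $\frac{\partial h}{\partial y_i}(y) \neq 0$ (such a coordinate exists precisely because $a \neq 0$).

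Finally I would plug these into the closed form $\dd y(x) = H^{-1} A\transpose (A H^{-1} A\transpose)^{-1} (A H^{-1} B - C) - H^{-1} B$. Since $p = 1$, the factor $A H^{-1} A\transpose = a\transpose H^{-1} a$ is a $1 \times 1$ object, so its inverse is the ordinary reciprocal $1/(a\transpose H^{-1} a)$, and $A H^{-1} B - C = a\transpose H^{-1} B$. Collecting the two terms gives $\dd y(x) = \left( \frac{H^{-1} a a\transpose H^{-1}}{a\transpose H^{-1} a} - H^{-1} \right) B$, which is the claimed identity.

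I do not expect a genuine obstacle here — the corollary is a bookkeeping specialization — so the only points requiring care are (i) recording explicitly that $h$, although written as a function of $u$ alone, is treated as a function of $(x, u)$ with vanishing partials in $x$, which is what makes $C$ and $\dd[XY]^2 h$ drop out, and (ii) noting that the scalar $a\transpose H^{-1} a$ is nonzero, which is already implicit in the invertibility of $A H^{-1} A\transpose$ assumed in Proposition~\ref{prop:main}. An alternative would be to re-run the implicit-differentiation argument in the proof of Proposition~\ref{prop:main} from scratch with a single $x$-independent constraint, but invoking the proposition directly is cleaner and makes the provenance of every term transparent.
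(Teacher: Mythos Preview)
Your proposal is correct and mirrors the paper's own proof, which simply invokes Proposition~\ref{prop:main} with $p=1$, $\dd[X] h \equiv \zeros_{1 \times n}$, and $\dd[XY]^2 h \equiv \zeros_{m \times n}$. (Minor typo: in your first paragraph you wrote $\dd[X] h = \zeros_{1 \times m}$ where it should be $\zeros_{1 \times n}$, but you have it right later when identifying $C$.)
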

\begin{proof}
  Follows from \propref{prop:main} with $p=1$, $\dd[X] h \equiv \zeros_{1 \times n}$ and
  $\dd[XY]^2 h \equiv \zeros_{m \times n}$.
\end{proof}

\begin{observation}
  When the constraint function is independent of $x$ we have $\dd y(x) \perp \dd[Y] h(y)$, from \eqnref{eqn:Dh} with $\dd[X] h = 0$. In other
  words, $y$ can only change (as a function of $x$) in directions that maintain the constraint
  $h(y) = 0$.
\end{observation}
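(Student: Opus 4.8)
The plan is to read the result straight off \eqnref{eqn:Dh}, which was obtained in the proof of \propref{prop:main} and holds under precisely the hypotheses assumed there. That equation states
\begin{align*}
\dd[X] h(x, y) + \dd[Y] h(x, y) \, \dd y(x) = \zeros_{p \times n}.
\end{align*}
First I would specialize to the present setting: because the constraint function $h$ does not depend on $x$, we have $\dd[X] h \equiv \zeros_{p \times n}$, so the identity collapses to $\dd[Y] h(y) \, \dd y(x) = \zeros_{p \times n}$.

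Next I would interpret this matrix identity column by column. Letting $d_j \in \reals^m$ denote the $j$-th column of $\dd y(x)$ --- that is, the partial derivative of $y$ with respect to the input coordinate $x_j$ --- the identity says $\dd[Y] h(y) \, d_j = \zeros_{p}$ for every $j$. Hence each $d_j$ lies in the null space of $\dd[Y] h(y)$ and is therefore orthogonal to every row of $\dd[Y] h(y)$, i.e., to each constraint gradient $\dd[Y] h_i(y)$. This is exactly what is meant by $\dd y(x) \perp \dd[Y] h(y)$.

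Finally I would supply the geometric interpretation that closes the observation: since $y$ is a regular point (guaranteed by the full-rank assumption $\text{rank}(\dd[Y] h(y)) = p$), the rows of $\dd[Y] h(y)$ span the normal space at $y$ to the constraint manifold $\{u \in \reals^m : h(u) = 0\}$, so the columns of $\dd y(x)$ lie in its tangent space; an infinitesimal change in the input thus perturbs $y$ only in directions that preserve $h(y) = 0$ to first order. There is essentially no obstacle here, as the statement is an immediate corollary of an equation already in hand; the only point requiring care is to make the relation ``$\perp$'' between the $m \times n$ matrix $\dd y(x)$ and the $p \times m$ matrix $\dd[Y] h(y)$ unambiguous, which the column-by-column reading does.
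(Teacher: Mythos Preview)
Your proposal is correct and follows exactly the route indicated in the paper: the observation is stated with its own one-line justification (``from \eqnref{eqn:Dh} with $\dd[X] h = 0$''), and you have simply unpacked that line---setting $\dd[X] h = 0$ in \eqnref{eqn:Dh} to obtain $\dd[Y] h(y)\,\dd y(x) = \zeros$, then reading off the orthogonality column by column and supplying the tangent-space interpretation. There is nothing to add; the paper gives no separate proof beyond the inline reference you have expanded.
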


Given this simpler form of the gradient for declarative nodes with
a single equality constraint, one might be tempted to naively combine multiple equality
constraints into a single constraint, for example $\tilde{h}(x, u)
\triangleq \sum_{i=1}^{p} h_i^2(x, u) = 0$ (or any other function of
the $h_i$'s that is identically zero if any only if the $h_i$'s are all
zero). However, this will not work as it violates the assumptions of the
method of Lagrange multipliers, namely that $\dd[Y] \tilde{h}(x, y) \neq 0$ at the
optimal point.

The following result is for the common case of multiple fixed linear
equality constraints that do not depend on the inputs (which has also
been reported previously~\cite{Gould:TR2016}).

\begin{corollary}
Consider a function $f: \reals^n \times \reals^m \to \reals$ and
let $A \in \reals^{p \times m}$ and $d \in \reals^p$ with $\text{rank}(A) = p$
define a set of $p$ under-constrained linear equations $Au = d$. Let
\begin{align*}
\begin{alignedat}{2}
y(x) \in & \enskip {\textstyle\argmin_{u \in \reals^m}} && \enskip f(x, u) \\
& \enskip \text{subject to} && Au = d.
\end{alignedat}
\end{align*}
Assume that $y(x)$ exists and that $f(x, u)$ is
second-order differentiable in the neighborhood of $(x, y(x))$.
Then
\begin{align*}
\dd y(x) &= \left(H^{-1}A\transpose (AH^{-1}A\transpose)^{-1}\!AH^{-1} - H^{-1}\right) B
\end{align*}
where $H = \dd[YY]^2 f(x, y)$ and $B = \dd[XY]^2 f(x, y)$.
\label{cor:affine}
\end{corollary}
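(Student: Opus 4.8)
The plan is to obtain the result as an immediate specialization of \propref{prop:main}. Write the linear system $Au = d$ as a vector-valued equality constraint $h(x, u) = Au - d$ that happens not to depend on $x$; component-wise, $h_i(x, u) = a_i\transpose u - d_i$ where $a_i\transpose$ denotes the $i$-th row of $A$. Since $\text{rank}(A) = p$, we have $\text{rank}(\dd[Y] h(x, y)) = \text{rank}(A) = p$, so the full-rank hypothesis of \propref{prop:main} holds; moreover $h$ is affine, hence trivially second-order differentiable, so all differentiability requirements reduce to those already imposed on $f$, and existence of $y(x)$ is assumed.

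Next I would evaluate the four matrices appearing in \propref{prop:main} for this choice of $h$. Because each $h_i$ is linear in $u$ and independent of $x$, every second derivative of each $h_i$ vanishes, $\dd[XY]^2 h_i = \zeros_{m \times n}$ and $\dd[YY]^2 h_i = \zeros_{m \times m}$, and likewise $\dd[X] h(x, y) = \zeros_{p \times n}$. The matrix called $A$ in \propref{prop:main}, namely $\dd[Y] h(x, y)$, is precisely the constraint matrix $A$. Hence the Lagrange-multiplier correction terms all disappear, leaving
\begin{align*}
  C &= \dd[X] h(x, y) = \zeros_{p \times n}, \\
  B &= \dd[XY]^2 f(x, y), \\
  H &= \dd[YY]^2 f(x, y),
\end{align*}
with neither $B$ nor $H$ depending on $\lambda$, so there is no need to compute the multipliers explicitly.

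Finally I would substitute into the gradient formula of \propref{prop:main}. With $C = \zeros$ the bracketed term $\left( A H^{-1} B - C \right)$ collapses to $A H^{-1} B$, giving
\begin{align*}
  \dd y(x) &= H^{-1} A\transpose \left( A H^{-1} A\transpose \right)^{-1} A H^{-1} B - H^{-1} B \\
  &= \left( H^{-1} A\transpose \left( A H^{-1} A\transpose \right)^{-1} A H^{-1} - H^{-1} \right) B,
\end{align*}
which is exactly the claimed expression with $H = \dd[YY]^2 f(x, y)$ and $B = \dd[XY]^2 f(x, y)$.

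There is no substantive obstacle here—the work is pure bookkeeping—but the one point deserving care is well-definedness of the inverses: non-singularity of $H$ is assumed, and, exactly as in \propref{prop:main}, the formula additionally presupposes that $A H^{-1} A\transpose$ is invertible (which is automatic, for instance, whenever $H$ is positive definite on $\reals^m$, and follows in general from $A$ having full row rank together with the nondegeneracy already exploited in the variable-elimination step of the proof of \propref{prop:main}, which we inherit). If a self-contained argument is preferred, one can instead implicitly differentiate the stationarity conditions $(\dd[Y] f(x, y))\transpose = A\transpose \lambda$ and $Ay = d$ with respect to $x$ and eliminate $\dd \lambda$, arriving at the same formula; invoking \propref{prop:main} is merely the shortest route.
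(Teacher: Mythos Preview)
Your proposal is correct and mirrors the paper's own argument: the paper also derives the corollary directly from \propref{prop:main} by setting $h(x, u) \triangleq Au - d$, so that $\dd[X] h \equiv \zeros_{p \times n}$, $\dd[XY]^2 h_i \equiv \zeros_{m \times n}$ and $\dd[YY]^2 h_i \equiv \zeros_{m \times m}$. Your additional remarks on verifying the rank hypothesis, the irrelevance of computing $\lambda$, and the invertibility caveats are accurate elaborations but not required beyond what the paper states.
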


\begin{proof}
	Follows from \propref{prop:main} with $h(x, u) \triangleq Au - d$ so that $\dd[X] h \equiv \zeros_{p \times n}$,
	$\dd[XY]^2 h \equiv \zeros_{m \times n}$ and $\dd[YY]^2 h \equiv \zeros_{m \times m}$.
\end{proof}

\subsection{Geometric interpretation}
\label{sec:geometric_interp}

Consider, for simplicity, a declarative node with
scalar input $x$ ($n=1$) and a single equality constraint ($p=1$).
An alternative form for the gradient from \propref{prop:main},
which lends itself to geometric interpretation
as shown in Figure~\ref{fig:gradient_geometry}, is given by
\begin{align}
\dd y(x) &= H^{-\frac{1}{2}} \left( \tilde{b} - \left( \hat{a}\transpose \tilde{b}\right)  \hat{a} + \frac{c}{\|\tilde{a}\|} \hat{a} \right)
\end{align}
where
$\tilde{a} = -H^{-\frac{1}{2}} a$,
$\tilde{b} = -H^{-\frac{1}{2}} b$ and
$\hat{a} = \|\tilde{a}\|^{-1}\tilde{a}$.
First, consider the case when the constraints do not depend on the input ($c = 0$).
The unconstrained derivative $\dd y_\text{unc}(x) = H^{-\frac{1}{2}}\tilde{b}$
(equality when $\lambda = 0$)
is corrected by removing the component perpendicular to the constraint surface.
That is, the unconstrained gradient is projected to the tangent plane of the constraint surface,
as shown in Figure~\ref{fig:gradient_geometry}.
The gradient is therefore parallel to the constraint surface, ensuring that gradient descent with an infinitesimal step size does not push the solution away from the constraint surface.
Thus, the equation encodes the following procedure: (i)~pre-multiply the unconstrained gradient vector $\dd y_\text{unc}(x)$ by $H^{\frac{1}{2}}$; (ii)~project onto the linearly transformed (by $H^{-\frac{1}{2}}$) tangent plane of the constraint surface; and (iv)~pre-multiply by $H^{-\frac{1}{2}}$.
Observe that the equation compensates for the curvature $H$ before the projection.
The same intuition applies to the case where the constraints depend on the inputs ($c \neq 0$), with an additional bias term
accounting for how the constraint surface changes with the input $x$.

\begin{figure}
	\centering
	\includegraphics[]{gradient_geometry.tikz}
	\caption{Geometry of the gradient for an equality constrained optimization problem. The unconstrained gradient $\dd y_\text{unc}(x)$ is corrected to ensure that the solution remains on the constraint surface after gradient descent with an infinitesimal step size.}
	\label{fig:gradient_geometry}
\end{figure}

\subsection{Relationship to conventional neural networks}

Deep declarative networks subsume traditional
feed-forward and recurrent neural networks in the sense that any layer
in the network that explicitly defines its outputs in terms of a
differentiable function of its inputs can be reduced to a declarative
processing node. A similar observation was made by \citename{Amos:ICML2017},
and the notion is formalized in the following proposition.

\begin{proposition}
Let $\tilde{f}: \reals^n \to \reals^m$ be an explicitly defined
differentiable forward processing function for a neural network
layer. Then we can alternatively define the behavior of the layer
declaratively as
\begin{align*}
	y &= \argmin_{u \in \reals^m} \frac{1}{2} \|u - \tilde{f}(x)\|^2.
\end{align*}
\end{proposition}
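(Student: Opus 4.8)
The plan is to verify two things: that the stated declarative node reproduces $\tilde{f}$ in the forward pass, and that applying \propref{prop:unconstrained} to it recovers the derivative $\dd \tilde{f}(x)$, so that the declarative and imperative descriptions are interchangeable for both evaluation and back-propagation.

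For the forward equivalence I would note that, for each fixed $x$, the objective $f(x, u) = \frac{1}{2}\|u - \tilde{f}(x)\|^2$ is a strictly convex quadratic in $u$, with gradient $\dd[Y] f(x, u) = (u - \tilde{f}(x))\transpose$ and Hessian $\dd[YY]^2 f(x, u) = I \succ 0$. Hence the first-order stationarity condition $\dd[Y] f(x, u) = \zeros_{1 \times m}$ has the unique solution $u = \tilde{f}(x)$, which is therefore the unique global minimizer; thus $y = \tilde{f}(x)$, exactly matching \defref{def:imperative_node}.

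For the backward pass, the objective is (trivially) second-order differentiable in a neighborhood of $(x, \tilde{f}(x))$ whenever $\tilde{f}$ is differentiable, so \propref{prop:unconstrained} applies. There $H = \dd[YY]^2 f(x, y) = I$ is non-singular and $B = \dd[XY]^2 f(x, y) = -\dd \tilde{f}(x)$, since $(\dd[Y] f(x, u))\transpose = u - \tilde{f}(x)$ and differentiating with respect to $x$ yields $-\dd \tilde{f}(x)$. The proposition then gives $\dd y(x) = -H^{-1} B = \dd \tilde{f}(x)$, which is precisely the derivative obtained by differentiating $y = \tilde{f}(x)$ directly. Hence the declarative formulation agrees with the imperative one on both input--output behavior and gradients.

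There is no substantive obstacle here; the result is essentially a consistency check, and its only subtlety is the apparent gap between the mere differentiability asked of an imperative layer and the second-order differentiability required by \propref{prop:unconstrained}. This is harmless because the objective's dependence on $x$ enters only through $\tilde{f}$ and the objective is quadratic in $u$, so no smoothness of $\tilde{f}$ beyond that already assumed for an imperative layer is needed for $H$, $B$, and hence $\dd y(x)$ to be well defined.
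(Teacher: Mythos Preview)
Your argument is correct and mirrors the paper's proof almost exactly: both note the strong convexity of $f(x,u)=\tfrac{1}{2}\|u-\tilde f(x)\|^2$, solve $\dd[Y] f=0$ to obtain $y=\tilde f(x)$, and then apply \propref{prop:unconstrained} with $H=I$ and $B=-\dd \tilde f(x)$ to recover $\dd y(x)=\dd\tilde f(x)$. Your additional remarks on the smoothness hypothesis are sound but not needed for the paper's version.
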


\begin{proof}
  The objective function $f(x, u) = \frac{1}{2} \|u -
  \tilde{f}(x)\|^2$ is strongly convex. Differentiating it with
  respect to $u$ and setting to zero we get $y = \tilde{f}(x)$. By
  \propref{prop:unconstrained} we have $H = I$ and $B = -\dd[X]
  \tilde{f}(x)$, giving $\dd y(x) = \dd \tilde{f}(x)$.
\end{proof}

Of course, despite the appeal of having a single mathematical framework
for describing processing nodes, there is no reason to do this in practice
since imperative and declarative nodes can co-exist in a network.

\textbf{Composability.}
One of the key design principles in deep learning is that models should define
their output as the composition of many simple functions (such as $y = \tilde{f}(\tilde{g}(x))$).
Just like conventional neural networks 
deep declarative networks can also be composed of many optimization problems
arranged in levels as the following example shows:
\begin{align}
  \arraycolsep=2pt
  \begin{array}{rll}
      y \in& \argmin_{u} & f(z, u) \\
      & \text{subject to} & z \in \argmin_{v} g(x, v).
  \end{array}
  \label{eqn:composition}
\end{align}

Here the gradients combine by the chain rule of differentiation as one might expect:
\begin{align}
\begin{split}
  \dd z(x) &= -H_g^{-1} B_g(x, z) \\
  \dd y(z) &= -H_f^{-1} B_f(z, y) \\
  \dd y(x) &= \dd y(z) \dd z(x)
\end{split}
  \label{eqn:grads}
\end{align}
where $H_g = \dd[ZZ]^2 g(x, z)$, etc. Here, during back propagation in computing
$\dd J(x)$ from $\dd J(y)$ for training objective $J$, it may be more efficient to
first compute $\dd J(z)$ as
\begin{align}
	\dd J(x) &= \underbrace{\left(\dd J(y) \dd y(z)\right)}_{\dd J(z)} \dd z(x),
\end{align}
which we will discuss further in \secref{sec:vec_jac_prod}.

\subsection{Extensions and discussion}
\label{sec:extensions}

\subsubsection{Feasibility problems}
\label{sec:feasibility}

Feasibility problems, where we simply seek a solution $y(x)$ that satisfies a set
of constraints (themselves functions of $x$), can be formulated as optimization
problems~\cite{Boyd:2004}, and hence declarative nodes. Here the objective is
constant (or infinite for an infeasible point) but the problem is more naturally
written as
\begin{align}
  \arraycolsep=2pt
    \begin{array}{lll}
	  \text{find} & u \\
      \text{subject to} &  h_i(x, u) = 0, & i = 1, \ldots, p \\
      & g_i(x, u) \leq 0, & i = 1, \ldots, q.
\end{array}
\label{eqn:feasible}
\end{align}
The gradient can be derived by removing inactive inequality constraints
and maintaining the invariant $\tilde{h}(x, y(x)) = 0$ where
$\tilde{h} = [h_1, \ldots, h_p, g_1, \ldots, g_q]$, i.e., following
the constraint surface. Implicit differentiation gives
\begin{align}
  A \dd y(x) + C &= 0
\end{align}
where $A = \dd[Y] \tilde{h}(x, y)$ and $C = \dd[X] \tilde{h}(x, y)$ as defined
in \propref{prop:main_ineq}. This is the same set of linear equations as the
second row of the Lagrangian in our proof of \propref{prop:main}, and has a unique
solution whenever $\text{rank}(A) = m$.%
\footnote{In \propref{prop:main} we assumed that $\text{rank}(A) = p \leq m$. Typically
	we have $p < m$ and so $A \dd y(x) = -C$ is under-determined, and the curvature of the objective resolves $\dd y(x)$.}

\subsubsection{Non-smooth objective and constraints}
\label{sec:nonsmooth}

Propositions~\ref{prop:main} and \ref{prop:main_ineq} showed how to compute derivatives of the solution to parametrized equality
constrained optimization problems with second-order differentiable objective and constraint functions
(in the neighborhood of the solution). However, we
can also define declarative nodes where the objective and constraints are non-smooth (that is,
non-differentiable at some points). To enable back-propagation learning with such declarative
nodes all we require is a local descent direction (or Clarke generalized
gradient~\cite{ClarkeTAMS:1975}). This is akin to the approach taken
in standard deep learning models with non-smooth activation functions such as the rectified linear
unit, $y = \max\{x, 0\}$, which is non-differentiable at $x = 0$.%
\footnote{Incidentally, the (elementwise) rectified linear unit can be defined declaratively as
$y = \argmin_{u \in \reals^{n}_{+}} \frac{1}{2} \|u - x\|_2^2$.}
We explore this in practice when we consider robust pooling and projecting
onto the boundary of $L_p$-balls in our illustrative examples and
experiments below.

\subsubsection{Non-regular solutions}
\label{sec:nonregular}

The existence of the Lagrange multipliers $\lambda$ (and their uniqueness)
in our results above was guaranteed by our assumption that $y(x)$
is a regular point. Moreover, the fact that $A = \dd[Y] h$ is full rank (and $H$ non-singular)
ensures that $\dd y(x)$ can be computed. Under other constraint qualifications, such as
Slater's condition~\cite{Boyd:2004}, the $\lambda$ are also guaranteed to exist but may not
be unique. Moreover, we may have $\text{rank}(A) \neq p + q$. If $p + q > m$ but $A$ is full rank,
which can happen if multiple constraints are active at $y(x)$, then we can directly solve the
overdetermined system of equations $A \dd y(x) = -C$.
On the other hand, if $A$ is rank deficient then we have
redundant constraints and must reduce $A$ to a full rank matrix by removing linearly dependent rows (and fixing
the corresponding $\lambda_i$ to zero) or resort to using a pseudo-inverse to approximate $\dd y(x)$.
This can happen if two or more (nonlinear inequality) constraints are active at $y(x)$ and tangent to each other at that point.

\subsubsection{Non-unique solutions}
\label{sec:nonunique}

In the development of deep declarative nodes we have made no assumption about
the uniqueness of the solution to the optimization problem (i.e., output of the node).
Indeed, many solutions may exist and our gradients are still valid. Needless to say, 
it is essential to use the same value of $y$ in the backward pass
as was computed during the forward pass (which is automatically handled in modern deep
learning frameworks by caching the forward computations). But there are other
considerations too.

Consider for now the
unconstrained case (\eqnref{eqn:dde_unconst}). If $\dd[YY]^2 f(x, y) \succ 0$
then $f$ is strongly convex in the neighborhood of $y$. Hence $y$ is an isolated
minimizer (that is, a unique minimizer within its neighborhood), and the gradient
derived in \propref{prop:unconstrained} holds for all such points. Nevertheless,
when performing parameter learning in a deep declarative network it is important
to be consistent in solving optimization problems with multiple minima to avoid
oscillating between solutions as we demonstrate with the following pathological
example. Consider
\begin{align}
  \arraycolsep=2pt
  \begin{array}{rll}
    y \in & \argmin_{u \in \reals} & 0 \\
    & \text{subject to} & (y - 1)^2 - x^2 = 0
  \end{array}
  \label{eqn:patho}
\end{align}
which has solution $y = 1 \pm x$ and, by inspection,
\begin{align}
  \dd y &= \begin{cases}
    -1, & \text{for $y = 1 - x$} \\
    +1, & \text{for $y = 1 + x$}
  \end{cases}
\end{align}
Depending on the choice of optimal solution the gradient of the loss being
propagated backwards through the node will point in opposite directions, which
is problematic for end-to-end learning.

\textbf{Singular Hessians.}
Now, if $\dd[YY]^2 f(x, y)$ is singular then $y$ may not be an isolated
minimizer. This can occur, for example, when $y$ is an
over-parametrized descriptor of some physical property such as an
unnormalized quaternion representation of a 3D rotation. In such
cases we cannot use \propref{prop:unconstrained} to find the
gradient. In fact the gradient is undefined. There are three strategies
we can consider for such points. First, reformulate the problem to
use a minimal parametrization or introduce constraints to remove
degrees of freedom thereby making the solution unique. Second, make the objective function
strongly convex around the solution, for example by adding the proximal term
$\frac{\delta}{2} \|u - y\|^2$ for some small $\delta > 0$. Third, use a
pseudo-inverse to solve the linear equation $\dd[YY]^2 f(x, y) \Delta y = -\dd[XY]^2(x,
y)$ for $\Delta y$ and take $\Delta y$ as a descent direction.
Supposing $H^\dagger$ is a pseudo-inverse of $\dd[YY]^2 f(x, y)$,
we can compute an entire family of solutions as
\begin{align}
\Delta y \in \{ -H^\dagger B + (I - H^\dagger H)Z \mid Z \in \reals^{m \times n} \}
\end{align}
where $H = \dd[YY]^2 f(x, y)$ and $B = \dd[XY]^2 f(x, y)$.

As a toy example, motivated by the quaternion representation,
consider the following problem that aligns the output vector with an
input vector $x \neq 0$ in $\reals^4$,
\begin{align}
  y &\in \argmin_{u \in \reals^4} f(x, u) \quad \text{with } f(x, u) \triangleq \frac{-x\transpose u}{\|u\|_2}
\end{align}
which has solution $y = \alpha x$ for arbitrary $\alpha > 0$. Here we have
$\dd[YY]^2 f(x, y) = \alpha^{-2} \|x\|_2^{-1}\left(I - \|x\|_2^{-2} xx\transpose \right)$,
which is singular (by the matrix inversion lemma~\cite{Golub:1996}). Fixing one
degree of freedom resolves the problem, which in this case is easily
done by forcing the output vector to be normalized:
\begin{align}
  \arraycolsep=2pt
  \begin{array}{rll}
    y \in & \argmin_{u \in \reals^4} & -x\transpose u \\
    & \text{subject to} & \|u\|_2 = 1.
  \end{array}
\end{align}
Alternatively we can compute the Moore--Penrose pseudo-inverse~\cite{Meyer:JSTOR1973}
of $\dd[YY]^2 f(x,y)$ to obtain
\begin{align}
  \dd y &= \alpha \left(I - \frac{1}{\|x\|_2^2} x x\transpose\right)
\end{align}
which is the same gradient as the constrained case where the solution is fixed
to have
$\alpha = 1/\|x\|_2$
for this problem.

\section{Illustrative Examples}
\label{sec:examples}

In this section we provide examples of unconstrained and constrained
declarative nodes that illustrate the theory developed above. For each
example we begin with a standard operation in deep learning models,
which is usually implemented as an imperative node. We show how the
operations can be equivalently implemented in a declarative framework,
and then generalize the operations to situations where an explicit
implementation is not possible, but where the operation results in
more desirable model behavior.

\subsection{Robust pooling}
\label{sec:robust_pooling}

Average (or mean) pooling is a standard operation in deep learning
models where multi-dimensional feature maps are averaged over one or
more dimensions to produce a summary statistic of the input data. For
the one-dimensional case, $x \in \reals^n \mapsto y \in \reals$, we
have
\begin{align}
  y &= \frac{1}{n} \sum_{i=1}^{n} x_i
\end{align}
and
\begin{align}
  \dd y &= \left[\frac{\partial y}{\partial x_1}, \ldots, \frac{\partial y}{\partial x_n}\right] = \frac{1}{n} \ones\transpose.
\end{align}
As a declarative node the mean pooling operator is
\begin{align}
  y &\in \argmin_{u \in \reals} \sum_{i=1}^{n} \frac{1}{2} (u - x_i)^2
\end{align}

While the solution, and hence gradients, can be expressed in closed form,
it is well-known that as a summary statistic the mean is very sensitive to
outliers. We can make the statistic more robust by replacing the quadratic
penalty function $\phi^{\text{quad}}(z) = \frac{1}{2} z^2$ with one that is
less sensitive to outliers. The pooling operation can then be generalized
as
\begin{align}
  y &\in \argmin_{u \in \reals} \sum_{i=1}^{n} \phi(u - x_i; \alpha)
\end{align}
where $\phi$ is a penalty function taking scalar argument and
controlled by parameter $\alpha$.

For example, the Huber penalty function defined as
\begin{align}
  \phi^{\text{huber}}(z; \alpha) &= \begin{cases}
    \frac{1}{2} z^2 & \text{for $|z| \leq \alpha$} \\
    \alpha(|z| - \frac{1}{2}\alpha) & \text{otherwise.}
  \end{cases}
  \label{eqn:huber}
\end{align}
is more robust to outliers. The Huber penalty is convex and hence the
pooled value can be computed efficiently via Newton's method or
gradient descent~\cite{Boyd:2004}. However, no closed-form solution
exists. Moreover, the solution set may be an interval of points. The
pseudo-Huber penalty~\cite{Hartley:2004},
\begin{align}
  \phi^{\text{pseudo}}(z; \alpha) &= \alpha^2 \left( \sqrt{1 + \left(\frac{z}{\alpha}\right)^2} - 1 \right)
  \label{eqn:pseudo_huber}
\end{align}
has similar behaviour to the Huber but is strongly convex so the
optimal solution (pooled value) is unique.

The Welsch penalty function~\cite{Dennis:CSSC78} is even more robust to
outliers flattening out to a fixed cost as $|z| \to \infty$,
\begin{align}
  \phi^{\text{welsch}}(z; \alpha) &= 1 - \exp\left(-\frac{z^2}{2 \alpha^2} \right)
  \label{eqn:welsch}
\end{align}
However it is non-convex and so obtaining the solution $y(x)$ is non-trivial.
Nevertheless, \emph{given a solution} we can use \propref{prop:unconstrained} to compute
a gradient for back-propagation parameter learning.

Going one step further we can defined the truncated quadratic penalty function,
\begin{align}
\phi^{\text{trunc}}(z; \alpha) &= \begin{cases}
\frac{1}{2} z^2 & \text{for $|z| \leq \alpha$} \\
\frac{1}{2} \alpha^2 & \text{otherwise.}
\end{cases}
\label{eqn:trunc_quad}
\end{align}
In addition to being non-convex, the function is also non-smooth. The
solution amounts to finding the maximal set of values all within some
fixed distance $\alpha$ from the mean. The objective $f(x, u) =
\sum_{i=1}^{n} \phi^{\text{trunc}}(u - x_i; \alpha)$ is not
differentiable at points where there exists an $i$ such that $|y(x) -
x_i| = \alpha$, in the same way that the rectified linear unit is
non-differentiable at zero. Nevertheless, we can still compute a
gradient almost everywhere (and take a one-sided gradient at
non-differentiable points).

The various penalty functions are depicted in
\tabref{tab:robust_pooling}. When used to define a robust pooling
operation there is no closed-form solution for all but the quadratic
penalty. Yet the gradient of the solution with respect to the input
data for all robust penalties can be calculated using
\propref{prop:unconstrained} as summarized in the table.

\afterpage{
	\begin{landscape}
		\begin{table}
  \centering
  \newcolumntype{C}{>{\centering\arraybackslash}X}
  \renewcommand*{\arraystretch}{1.3}
  \begin{tabularx}{\linewidth}{@{}l | C C C C C@{}}
  	\multicolumn{1}{l}{}
    & {\sc Quadratic} & {\sc Pseudo-Huber} & {\sc Huber} & {\sc Welsch} & {\sc Trunc. Quad.} \\
    \hline
    \rule{0pt}{2.0\normalbaselineskip}
    $\phi(z; \alpha)$
    &
    $\frac{1}{2} z^2$
    &
	$\alpha^2 \left( \sqrt{1 + \left( \frac{z}{\alpha}\right)^2 } - 1 \right)$
    &
    \(
    \begin{cases}
      \frac{1}{2} z^2 & \text{for $|z| \leq \alpha$} \\
      \alpha(|z| - \frac{1}{2}\alpha) & \text{otherwise.}
    \end{cases}
    \)
    &
    $1 - \exp\left(-\frac{z^2}{2 \alpha^2} \right)$
    &
    \(
    \begin{cases}
      \frac{1}{2} z^2 & \text{for $|z| \leq \alpha$} \\
      \frac{1}{2} \alpha^2 & \text{otherwise.}
    \end{cases}
    \)
    \\[12pt]
    &
    \begin{tikzpicture}[xscale=0.25, yscale=0.25]
      \draw[-{Latex[]}] (-4, 0) -- (4, 0) node[below] {$z$};
      \draw[-{Latex[]}] (0, -1) -- (0, 5) node[right] {$\phi$};

      \draw[domain=-3:3,smooth,variable=\x,black] plot ({\x}, {0.5*\x*\x});
    \end{tikzpicture}
    &
    \begin{tikzpicture}[xscale=0.25, yscale=0.25]
      \draw[-{Latex[]}] (-4, 0) -- (4, 0) node[below] {$z$};
      \draw[-{Latex[]}] (0, -1) -- (0, 5) node[right] {$\phi$};

      \draw[domain=-3:3,smooth,variable=\x,black] plot ({\x}, {2.0*(sqrt(1 + \x*\x) - 1)});
    \end{tikzpicture}
    &
    \begin{tikzpicture}[xscale=0.25, yscale=0.25]
      \draw[-{Latex[]}] (-4, 0) -- (4, 0) node[below] {$z$};
      \draw[-{Latex[]}] (0, -1) -- (0, 5) node[right] {$\phi$};

      \draw[domain=-1:1,smooth,variable=\x,black] plot ({\x}, {1.5 * (0.5*\x*\x)});
      \draw[domain=-3:-1,smooth,variable=\x,black] plot ({\x}, {1.5 * (abs(\x) - 0.5)});
      \draw[domain=1:3,smooth,variable=\x,black] plot ({\x}, {1.5 * (abs(\x) - 0.5)});
    \end{tikzpicture}
    &
    \begin{tikzpicture}[xscale=0.25, yscale=0.25]
      \draw[-{Latex[]}] (-4, 0) -- (4, 0) node[below] {$z$};
      \draw[-{Latex[]}] (0, -1) -- (0, 5) node[right] {$\phi$};

      \draw[domain=-4:4,smooth,variable=\x,black] plot ({\x}, {2.0 * (1.0 - exp(-0.5*\x*\x))});
    \end{tikzpicture}
    &
    \begin{tikzpicture}[xscale=0.25, yscale=0.25]
      \draw[-{Latex[]}] (-4, 0) -- (4, 0) node[below] {$z$};
      \draw[-{Latex[]}] (0, -1) -- (0, 5) node[right] {$\phi$};

      \draw[domain=-1:1,smooth,variable=\x,black] plot ({\x}, {4.0 * 0.5*\x*\x});
      \draw[domain=-4:-1,smooth,variable=\x,black] plot ({\x}, {4.0 * 0.5});
      \draw[domain=1:4,smooth,variable=\x,black] plot ({\x}, {4.0 * 0.5});
    \end{tikzpicture}
    \\[12pt]
	&
	\begin{tikzpicture}[xscale=0.25, yscale=0.125]
	\draw[-{Latex[]}] (-6, 0) -- (6, 0) node[below] {$u$};
	\draw[-{Latex[]}] (0, -2) -- (0, 12) node[right] {$f$};

	\draw[domain=-3:5,smooth,variable=\x,red,dotted] plot ({\x-2}, {0.5*\x*\x});
	\draw[domain=-5:3,smooth,variable=\x,blue,dotted] plot ({\x+2}, {0.5*\x*\x});
	\draw[domain=-2.75:2.75,smooth,variable=\x,black] plot ({\x}, {0.5*(\x-2)*(\x-2) + 0.5*(\x+2)*(\x+2)});	
	\end{tikzpicture}
    &
    \begin{tikzpicture}[xscale=0.25, yscale=0.125]
	\draw[-{Latex[]}] (-6, 0) -- (6, 0) node[below] {$u$};
	\draw[-{Latex[]}] (0, -2) -- (0, 12) node[right] {$f$};

	\draw[domain=-3:7,smooth,variable=\x,red,dotted] plot ({\x-2}, {1.5*(sqrt(1 + \x*\x) - 1)});
	\draw[domain=-7:3,smooth,variable=\x,blue,dotted] plot ({\x+2}, {1.5*(sqrt(1 + \x*\x) - 1)});

	\draw[domain=-5:5,smooth,variable=\x,black] plot ({\x}, {1.5*(sqrt(1 + (\x-2)*(\x-2)) - 1 + sqrt(1 + (\x+2)*(\x+2)) - 1)});
	\end{tikzpicture}
    &
    \begin{tikzpicture}[xscale=0.25, yscale=0.125]
	\draw[-{Latex[]}] (-6, 0) -- (6, 0) node[below] {$u$};
	\draw[-{Latex[]}] (0, -2) -- (0, 12) node[right] {$f$};

	\draw[domain=-1:1,smooth,variable=\x,red,dotted] plot ({\x-2}, {1.5 * (0.5*\x*\x)});
	\draw[domain=-3:-1,smooth,variable=\x,red,dotted] plot ({\x-2}, {1.5 * (abs(\x) - 0.5)});
	\draw[domain=1:7,smooth,variable=\x,red,dotted] plot ({\x-2}, {1.5 * (abs(\x) - 0.5)});

	\draw[domain=-1:1,smooth,variable=\x,blue,dotted] plot ({\x+2}, {1.5 * (0.5*\x*\x)});
	\draw[domain=-7:-1,smooth,variable=\x,blue,dotted] plot ({\x+2}, {1.5 * (abs(\x) - 0.5)});
	\draw[domain=1:3,smooth,variable=\x,blue,dotted] plot ({\x+2}, {1.5 * (abs(\x) - 0.5)});

	\draw[domain=-4.5:-3,smooth,variable=\x,black] plot ({\x}, {1.5 * (abs(\x-2) - 0.5 + abs(\x+2) - 0.5)});
	\draw[domain=-3:-1,smooth,variable=\x,black] plot ({\x}, {1.5 * (0.5*(\x+2)*(\x+2) + abs(\x-2) - 0.5)});
	\draw[domain=-1:1,smooth,variable=\x,black] plot ({\x}, {1.5 * (abs(\x-2) - 0.5 + abs(\x+2) - 0.5)});
	\draw[domain=1:3,smooth,variable=\x,black] plot ({\x}, {1.5 * (0.5*(\x-2)*(\x-2) + abs(\x+2) - 0.5)});
	\draw[domain=3:4.5,smooth,variable=\x,black] plot ({\x}, {1.5 * (abs(\x-2) - 0.5 + abs(\x+2) - 0.5)});
	\end{tikzpicture}
    &
    \begin{tikzpicture}[xscale=0.25, yscale=0.25]
	\draw[-{Latex[]}] (-6, 0) -- (6, 0) node[below] {$u$};
	\draw[-{Latex[]}] (0, -1) -- (0, 6) node[right] {$f$};

	\draw[domain=-4:6,smooth,variable=\x,red,dotted] plot ({\x-2}, {2.25 * (1.0 - exp(-0.5*\x*\x))});
	\draw[domain=-6:4,smooth,variable=\x,blue,dotted] plot ({\x+2}, {2.25 * (1.0 - exp(-0.5*\x*\x))});
	\draw[domain=-6:6,smooth,variable=\x,black] plot ({\x}, {2.25 * (1.0 - exp(-0.5*(\x-2)*(\x-2)) + (1.0 - exp(-0.5*(\x+2)*(\x+2))))});
	\end{tikzpicture}
    &
    \begin{tikzpicture}[xscale=0.25, yscale=0.25]
	\draw[-{Latex[]}] (-6, 0) -- (6, 0) node[below] {$u$};
	\draw[-{Latex[]}] (0, -1) -- (0, 6) node[right] {$f$};

	\draw[domain=-1:1,smooth,variable=\x,red,dotted] plot ({\x-2}, {4.0 * 0.5*\x*\x});
	\draw[domain=-4:-1,smooth,variable=\x,red,dotted] plot ({\x-2}, {4.0 * 0.5});
	\draw[domain=1:6,smooth,variable=\x,red,dotted] plot ({\x-2}, {4.0 * 0.5});

	\draw[domain=-1:1,smooth,variable=\x,blue,dotted] plot ({\x+2}, {4.0 * 0.5*\x*\x});
	\draw[domain=-6:-1,smooth,variable=\x,blue,dotted] plot ({\x+2}, {4.0 * 0.5});
	\draw[domain=1:4,smooth,variable=\x,blue,dotted] plot ({\x+2}, {4.0 * 0.5});

	\draw[domain=-6:-3,smooth,variable=\x,black] plot ({\x}, {4.0 * (0.5 + 0.5)});
	\draw[domain=-3:-1,smooth,variable=\x,black] plot ({\x}, {4.0 * (0.5 + 0.5*(\x+2)*(\x+2))});
	\draw[domain=-1:1,smooth,variable=\x,black] plot ({\x}, {4.0 * (0.5 + 0.5)});
	\draw[domain=1:3,smooth,variable=\x,black] plot ({\x}, {4.0 * (0.5 + 0.5*(\x-2)*(\x-2))});
	\draw[domain=3:6,smooth,variable=\x,black] plot ({\x}, {4.0 * (0.5 + 0.5)});
	\end{tikzpicture}
    \\[12pt]
    & closed-form, convex, & convex, smooth, & convex, non-smooth ($C^1$), & non-convex, smooth, & non-convex, non-smooth ($C^0$),
    \\
    & smooth, unique solution & unique solution & non-isolated solutions & isolated solutions & isolated solutions
    \\[12pt]
    $\dd[YY]^2 f(x, y)$
    &
    $n$
    &
    $\sum_{i=1}^{n} \left( 1 + \left( \frac{y - x_i}{\alpha} \right) ^2 \right)^{-3/2}$
    &
    $\sum_{i=1}^{n} \ind{|y - x_i| \leq \alpha}$
    &
    $\sum_{i=1}^{n} \frac{\alpha^2 - (y - x_i)^2}{\alpha^4} \exp\left(-\frac{(y - x_i)^2}{2 \alpha^2} \right)$
    &
    $\sum_{i=1}^{n} \ind{|y - x_i| \leq \alpha}$
    \\[18pt]
    $\dd[XY]^2 f(x, y)$
    &
    $-\ones_n^T$
    &
    $\myvec{-\left( 1 + \left( \frac{y - x_i}{\alpha} \right) ^2 \right)^{-3/2}}^T$
    &
    $\myvec{-\ind{|y - x_i| \leq \alpha}}$
    &
    $\myvec{\frac{(y - x_i)^2 - \alpha^2}{\alpha^4} \exp\left(-\frac{(y - x_i)^2}{2 \alpha^2}\right)}^T$
    &
    $\myvec{-\ind{|y - x_i| \leq \alpha}}^T$
    \\[18pt]
    $\dd y(x)$
    &
    $\frac{1}{n} \ones_n^T$
    &
    \begin{tabular}{c}
	  $\myvec{\frac{w_i}{\sum_{j=1}^{n} w_j}}^T$ where \\[12pt]
	  $w_i = \left( 1 + \left( \frac{y - x_i}{\alpha} \right) ^2 \right)^{-3/2}$
    \end{tabular}
    &
    $\myvec{\frac{\ind{|y - x_i| \leq \alpha}}{\sum_{j=1}^{n} \ind{|y - x_j| \leq \alpha}}}^T$
    &
    \begin{tabular}{c}
      $\myvec{\frac{w_i}{\sum_{j=1}^{n} w_j}}^T$ where \\[12pt]
      $w_i = \frac{\alpha^2 - (y - x_i)^2}{\alpha^4} \exp\left(-\frac{(y - x_i)^2}{2 \alpha^2}\right)$
    \end{tabular}
    &
    $\myvec{\frac{\ind{|y - x_i| \leq \alpha}}{\sum_{j=1}^{n} \ind{|y - x_j| \leq \alpha}}}^T$
    \\[-6pt]
    \\
    \hline
  \end{tabularx}
  \caption{The gradient of the estimate for a robust mean over a
    vector of values for various penalty functions $\phi(z; \alpha)$ when
    it exists. The robust estimate is found as $y(x) = \argmin_{u \in
      \reals} f(x, u)$ with $f(x, u) = \sum_{i=1}^{n} \phi(u - x_i;
    \alpha)$. The first plot (row 2) shows the penalty function, the second plot (row 3)
    shows an example $f(x, u)$ for $x = (-2\alpha, 2\alpha) \in \reals^2$.
    Plots have been drawn at different scales to enhance visualization of shape differences between the penalty functions.
    Despite not being able to solve the problem in
    closed form for most penalty functions, given a solution the gradient
    $\dd y(x)$ can still be calculated. Notice the similarity in gradient
    forms due to the objective $f$ being composed of a sum of independent penalties,
    each penalty symmetric in $x$ and $y$. Moreover, the Huber and
    truncated quadratic have exactly the same gradient form (when it exists)
    even though their solutions may be different. Under some conditions
    Huber and Welsch may result in a zero $\dd[YY]^2 f$ and, hence, an undefined gradient.
    However, we did not see this in practice. More importantly, computation of the
    Welsch gradient is sensitive to numerical underflow and care should be taken to
    appropriately scale the $w_i$'s before dividing by their sum.}
  \label{tab:robust_pooling}
\end{table}
	\end{landscape}
}

\subsection{$\boldsymbol{L_p}$-sphere or $\boldsymbol{L_p}$-ball projection}
\label{sec:projection}

Euclidean projection onto an $L_2$-sphere, equivalent to $L_2$ normalization, is another standard operation in deep learning models.
For $x \in \reals^n \mapsto y \in \reals^n$, we have
\begin{align}
y = \frac{1}{\|x\|_2} x
\end{align}
and
\begin{align}
\dd y(x) = \frac{1}{\|x\|_2} \left( I - \frac{1}{\|x\|_2^2} xx\transpose \right).
\end{align}
As a declarative node the $L_2$-sphere projection operator is
\begin{align}
\arraycolsep=2pt
\begin{array}{rll}
y \in & \argmin_{u \in \reals^n} & \frac{1}{2} \|u - x\|_2^2 \\
& \text{subject to} & \|u\|_2 = 1.
\end{array}
\end{align}

While the solution, and hence gradients, can be expressed in closed-form,
it may be desirable to use other $L_p$-spheres or balls, for regularization,
sparsification, or to improve generalization performance \cite{Oymak:ICML2018}, as
well as for adversarial robustness.
The projection operation onto an $L_p$-sphere can then be generalized as
\begin{align}
\arraycolsep=2pt
\begin{array}{rll}
y_p \in & \argmin_{u \in \reals^n} & \frac{1}{2} \|u - x\|_2^2 \\
& \text{subject to} & \|u\|_p = 1
\end{array}
\end{align}
where $\|\cdot\|_p$ is the $L_p$-norm.

For example, projecting onto the $L_1$-sphere has no closed-form solution,
however \citename{Duchi:ICML08} provide an efficient $O(n)$ solver.
Similarly, projecting onto the $L_\infty$-sphere has an efficient (trivial) solver.
Given a solution from one of these solvers, we can use \propref{prop:main} to
compute a gradient.
For both cases, the constraint functions $h$ are non-smooth and so are not
differentiable whenever the optimal projection $y$ lies on an $(n-k)$-face
for $2 \leq k \leq n$. For example, when $y$ lies on a vertex, changes in $x$
may not have any effect on $y$.
While the gradient obtained from \propref{prop:main} provides a valid local descent
direction (Clarke generalized gradient \cite{ClarkeTAMS:1975}), it can be modified to prefer a zero gradient
at these plateau dimensions by masking the gradient,
and thereby becoming identical to the gradient obtained by numerical 
differentiation methods.

The various constraint functions and gradients are given in
\tabref{tab:lp_projection}. When used to define a projection
operation, only the $L_2$ case has a closed-form solution.
Nonetheless, the gradient of the solution with respect to the input
data for all constraint functions can be calculated using
\propref{prop:main} as summarized in the table.

\begin{table*}[!t]
  \centering
  \newcolumntype{C}{>{\centering\arraybackslash}X}
  \renewcommand*{\arraystretch}{1.3}
  \begin{tabularx}{\textwidth}{@{}l | C C C@{}}
    \multicolumn{1}{l}{} & {$L_2$} & {$L_1$} & {$L_{\infty}$}\\
    \hline
    \rule{0pt}{1.5\normalbaselineskip}
    $\displaystyle h(u)$
    &
    $\displaystyle \| u \|_2 - 1 = \sqrt{\textstyle \sum_{i=1}^{n} u_i^2 } - 1$
    &
    $\displaystyle \| u \|_1 - 1 = {\textstyle \sum_{i=1}^{n}} |u_i| - 1$
    &
    $\displaystyle \| u \|_\infty - 1 = \max_{i} \{|u_i|\} - 1$
    \\[6pt]
    &
    \colorlet{lightgray}{black!25}
    \begin{tikzpicture}
	    \def \x {2.0};
	    \def \y {0.75};
	    \foreach \r in {2, ..., 6}
			\draw[lightgray] (\x, \y) circle (0.05 * \r * \r); 
		\draw[fill=black] (\x, \y) circle (0.05) node[above right] {$x$};
		
		\draw[-{Latex[]}] (0, -1.5) -- (0, 2);
		\draw[-{Latex[]}] (-1.5, 0) -- (3.5, 0);
		
		\draw[thick] (0,0) circle (1);
		\draw[red,dashed] (\x, \y) circle ({sqrt(\x^2 + \y^2) - 1.0});
		\draw[fill=black] ({sqrt(\x^2 / (\x^2 + \y^2)}, {sqrt(\y^2 / (\x^2 + \y^2)}) circle (0.05) node[left] {$y_2$};
		\draw[dashed] (\x, \y) -- ({sqrt(\x^2 / (\x^2 + \y^2)}, {sqrt(\y^2 / (\x^2 + \y^2)});
    \end{tikzpicture}
    &
    \colorlet{lightgray}{black!25}
    \begin{tikzpicture}
	    \def \x {2.0};
	    \def \y {0.75};
	    \foreach \r in {2, ..., 6}
			\draw[lightgray] (\x, \y) circle (0.05 * \r * \r); 
		\draw[fill=black] (\x, \y) circle (0.05) node[above right] {$x$};
		
		\draw[-{Latex[]}] (0, -1.5) -- (0, 2);
		\draw[-{Latex[]}] (-1.5, 0) -- (3.5, 0);
		
		\draw[thick] (0, 1) -- (1, 0) -- (0, -1) -- (-1, 0) -- (0, 1);
		\draw[red,dashed] (\x, \y) circle ({sqrt((\x - 1.0)^2 + \y^2)});
		\draw[fill=black] (1, 0) circle (0.05) node[below=1mm] {$y_1$};
		\draw[dashed] (\x, \y) -- (1, 0);
    \end{tikzpicture}
    &
	\colorlet{lightgray}{black!25}
    \begin{tikzpicture}
	    \def \x {2.0};
	    \def \y {0.75};
	    \foreach \r in {2, ..., 6}
			\draw[lightgray] (\x, \y) circle (0.05 * \r * \r); 
		\draw[fill=black] (\x, \y) circle (0.05) node[above right] {$x$};
		
		\draw[-{Latex[]}] (0, -1.5) -- (0, 2);
		\draw[-{Latex[]}] (-1.5, 0) -- (3.5, 0);
		
		\draw[thick] (-1,-1) rectangle (1, 1);
		\draw[red,dashed] (\x, \y) circle (\x - 1.0);
		\draw[fill=black] (1, \y) circle (0.05) node[below left] {$y_{\infty}$};
		\draw[dashed] (\x, \y) -- (1, \y);
    \end{tikzpicture}
    \\[6pt]
    & closed-form, smooth, & non-smooth ($C^0$), & non-smooth ($C^0$),
    \\
    & unique$^\dagger$ solution & isolated solutions & isolated solutions
    \\[6pt]
    $\displaystyle \dd[Y] h(y)$
    &
    $\displaystyle y$
    &
    $\displaystyle \myvec{\sign{y_i}}^T$
    &
    \begin{tabular}{c}
    $\displaystyle \myvec{\ind{i \in I^\star} \, \sign{y_i}}^T$\\
    $\displaystyle I^\star = \{ i \mid |y_i| \geq |y_j| \; \forall j \}$
	\end{tabular}
    \\[11pt]
    $\displaystyle \dd[YY]^2 h(y)$
    &
    $\displaystyle I - yy^T $
    &
    $\displaystyle \zeros_{n \times n}$
    &
    $\displaystyle \zeros_{n \times n}$
    \\[11pt]
    $\displaystyle \lambda$
    &
    $\displaystyle 1 - \|x\|_2$
    &
    $\displaystyle \sign{y_{i}}(y_{i} - x_i) \; \forall i$
    &
    $\displaystyle \ind{i \in I^\star} \, \sign{y_i} (y_i - x_i)  \; \forall i \in I^\star$
    \\[11pt]
    $\displaystyle \dd y(x)$
    &
    $\displaystyle \frac{1}{\|x\|_2} \left( I - yy^T \right)$
    &
    $\displaystyle I - \frac{\dd[Y] h(y)^T \dd[Y] h(y)}{\dd[Y] h(y) \dd[Y] h(y)^T}$
    &
    $\displaystyle I - \frac{\dd[Y] h(y)^T \dd[Y] h(y)}{\dd[Y] h(y) \dd[Y] h(y)^T}$
    \\[11pt]
    $\displaystyle \dd^{\text{z}} y(x)$
    &
    $\displaystyle \frac{1}{\|x\|_2} \left( I - yy^T \right)$
    &
    $\displaystyle \diag{|\dd[Y] h(y)|} - \frac{\dd[Y] h(y)^T \dd[Y] h(y)}{\dd[Y] h(y) \dd[Y] h(y)^T}$
    &
    $\displaystyle I - \diag{|\dd[Y] h(y)|}$
    \\[-6pt]
    \\
    \hline
  \end{tabularx}
  \caption{
  The gradient of the Euclidean projection onto various $L_p$-spheres
  with constraint functions $h$, when it exists. The projection is found as
  $y(x) = \argmin_{u \in \reals^n} f(x, u)$ subject to $h(u) = 0$, with
  $f(x, u) = \frac{1}{2} \|u - x\|_2^2$. In all cases,
  $B = \dd[X Y]^2 f(x, y) = -I$, and $\dd[YY]^2 f(x, y) = I$.
  The plots show the Euclidean projection of an example point
  $x \in \reals^2$ onto $L_p$-spheres for $p = 1, 2 \text{ and } \infty$.
  Despite not being able to solve the problem in closed form for $L_1$ and
  $L_\infty$ constraints, given a solution the gradient $\dd y(x)$ can still be calculated.
  While $\dd y(x)$ provides a valid local descent direction in all cases,
  the gradient can be altered to prefer a zero gradient along plateau dimensions
  by zeroing the rows and columns corresponding to the zero ($L_1$)
  or non-zero ($L_\infty$) elements of $\dd[Y] h(y)$.
  This gradient $\dd^{\text{z}} y(x)$ is obtained by masking
  $\dd y(x)$ with   $pp^T$ for $L_1$ or $\bar{p} \bar{p}^T$ for $L_\infty$,
  where $p = |\dd[Y] h(y)|$ is treated as a Boolean vector.
  ${}^\dagger$Except for $x = 0$ where the solution is non-isolated (the entire sphere).
  For $L_1$ and $L_\infty$ the solution is unique when $x$ is outside the ball but
  can be non-unique for $x$ inside the ball. 
  }
  \label{tab:lp_projection}
\end{table*}

We can also consider a declarative node that projects onto the unit $L_p$-ball
with output defined as
\begin{align}
\arraycolsep=2pt
\begin{array}{rll}
y_p^\circ \in & \argmin_{u \in \reals^n} & \frac{1}{2} \|u - x\|_2^2 \\
& \text{subject to} & \|u\|_p \leq 1
\end{array}
\end{align}
where we now have an inequality constrained convex optimization problem (for $p \geq 1$).
Here we take the gradient $\dd y_p^\circ$ to be zero if $\|x\|_p < 1$
and $\dd y_p$ otherwise. In words, we set the gradient to zero
if the input already lies inside the unit ball. Otherwise we
use the gradient obtained from projecting onto the $L_p$-sphere.

\section{Implementation Considerations}
\label{sec:implementation}

In this section we provide some practical considerations relating to the
implementation of deep declarative nodes.

\subsection{Vector--Jacobian product}
\label{sec:vec_jac_prod}

As discussed above, the key challenge for declarative nodes is in computing $\dd[] y(x)$
for which we have provided linear algebraic expressions in terms of first- and second-order
derivatives of the objective and constraint functions. In computing the gradient of the
loss function, $J$, for the end-to-end model we compute
\begin{align}
	\dd J(x) &= \dd J(y) \dd y(x)
\end{align}
and the order in which we evaluate this expression can have a dramatic effect on
computational efficiency.

For simplicity consider the unconstrained case (\propref{prop:unconstrained}) and
let $v\transpose = \dd J(y) \in \reals^{1 \times m}$. We have
\begin{align}
	\dd J(x) &= -v\transpose H^{-1} B
	\label{eqn:comp_efficiency}
\end{align}
where $H \in \reals^{m \times m}$ and $B \in \reals^{m \times n}$. The expression
can be evaluated in two distinct ways---either as $(v\transpose H^{-1})B$ or as $v\transpose (H^{-1} B)$
where parentheses have been used to highlight calculation order. Assuming that
$H^{-1}$ has been factored (and contains no special structure), the cost of evaluating
$\dd J(x) \in \reals^{1 \times n}$ is $O(m^2 + mn)$ for the former and $O(m^2n)$ for
the latter, and thus there is a computational advantage to evaluating \eqnref{eqn:comp_efficiency}
from left to right.

Moreover, it is often the case that $n \gg m$ in deep learning models, such as for robust pooling.
Here, rather than pre-computing $B$, which would require $O(mn)$ bytes of precious GPU memory to store,
it is much more space efficient to compute the elements of $\dd J(x)$ iteratively as
\begin{align}
	\dd J(x)_i &= \tilde{v}\transpose b_i
\end{align}
where $\tilde{v} = -H^{-1}v$ is cached and $b_i$ is the $i$-th column of $B$ computed
on the fly and therefore only requiring $O(m)$ bytes.

\subsection{Automatic differentiation}
\label{sec:auto_diff}

The forward processing function for some deep declarative nodes (such as those defined by
convex optimization problems) can be implemented using generic solvers. 
Indeed, \citename{Agrawal:NIPS2019} provide a general framework for specifying
(convex) deep declarative nodes through disciplined convex optimization and providing methods for
both the forward and backward pass.
However, many other interesting nodes will require specialized solvers for their forward
functions (\eg~the coordinate-based SDP solver proposed by \citename{Wang:ICML2019}).
Even so, the gradient calculation in the backward pass can be implemented using generic
automatic differentiation techniques whenever the objective and constraint functions are
twice continuously differentiable (in the neighborhood of the solution). For example, the
following Python code makes use of the \texttt{autograd} package to compute the gradient
of an arbitrary unconstrained deep declarative node with twice differentiable, first-order
objective \texttt{f} at minimum \texttt{y} given input \texttt{x}.

\begin{pycode}
import autograd.numpy as np
from autograd import grad, jacobian

def gradient(f, x, y):
  fY = grad(f, 1)
  fYY = jacobian(fY, 1)
  fXY = jacobian(fY, 0)

  return -1.0 * np.linalg.solve(fYY(x,y), fXY(x,y))
\end{pycode}

If called repeatedly (such as during learning) the partial derivative functions \texttt{fY},
\texttt{fYY} and \texttt{fXY} can be pre-processed and cached. And of course the gradient can instead be
manually coded for special cases and when the programmer chooses to introduce memory and speed efficiencies.
The implementation for equality and inequality constrained problems follows in a straightforward way and
is omitted here for brevity.
We provide full Python and PyTorch reference implementations and examples at \url{http://deepdeclarativenetworks.com}.

\section{Experiments}
\label{sec:experiments}

We conduct experiments on standard image and point cloud classification tasks to
assess the viability of applying declarative networks to challenging computer vision
problems. Our goal is not to obtain state-of-the-art results. Rather we aim to validate
the theory presented above and demonstrate how declarative nodes can easily be
integrated into non-trivial deep learning pipelines.
To this end, we implement the pooling and projection operations from \secref{sec:examples}.
For efficiency, we use the symbolic derivatives obtained in Tables~\ref{tab:robust_pooling} and \ref{tab:lp_projection}, and never compute the Jacobian directly, instead computing the vector--Jacobian product to reduce the memory overhead.
All code is available at \url{http://deepdeclarativenetworks.com}.

For the point cloud classification experiments with robust pooling, we use the ModelNet40 CAD dataset~\cite{Wu:CVPR15}, with 2048 points sampled per object, normalized into a unit ball~\cite{Qi:CVPR2017}.
Each model is trained using stochastic gradient descent for 60 epochs with an initial learning rate of 0.01, decaying by a factor of 2 every 20 epochs, momentum (0.9), and a batch size of 24. All models, variations of PointNet~\cite{Qi:CVPR2017} implemented in PyTorch, have 0.8M parameters.
Importantly, pooling layers do not add additional parameters.

The results are shown in Tables~\ref{tab:modelnet_pooling} and \ref{tab:modelnet_pooling_2}, for a varying fraction of outliers seen during training and testing (\tabref{tab:modelnet_pooling}) or only during testing, that is trained without outliers (\tabref{tab:modelnet_pooling_2}).
We report top-1 accuracy and mean Average Precision (mAP) on the test set.
Outlier points are sampled uniformly from the unit ball, randomly replacing existing inlier points, following the same protocol as \citename{Qi:CVPR2017}.
The robust pooling layer replaces the max pooling layer in the model, with the quadratic penalty function being denoted by Q, pseudo-Huber by PH, Huber by H, Welsch by W, and truncated quadratic by TQ.
The optimizer for the last two (non-convex) functions is initialized to both the mean and the median, and then the lowest local minimum selected as the solution. It is very likely that RANSAC~\cite{Fischler:1981} search would produce better results, but this was not tested.
For all experiments, the robustness parameter $\alpha$ (loosely, the inlier threshold) is set to one.

We observe that the robust pooling layers perform significantly better than max pooling and quadratic (average) pooling when outliers are present, especially when not trained with the same outlier rate.
There is a clear trend that, as the outlier rate increases, the most accurate model tends to be more robust model (further towards the right).

For the image classification experiments with $L_p$-sphere and $L_p$-ball projection, we use the ImageNet 2012 dataset~\cite{ImageNet:2009}, with the standard single central $224\times224$ crop protocol. Each model is trained using stochastic gradient descent for 90 epochs with an initial learning rate of 0.1, decaying by a factor of 10 every 30 epochs, weight decay (1e-4), momentum (0.9), and a batch size of 256. All models, variations of ResNet-18~\cite{He:CVPR2016} implemented in PyTorch, have 11.7M parameters. As with the robust pooling layers, projection layers do not add additional parameters.

The results are shown in \tabref{tab:imagenet_projection}, with top-1 accuracy, top-5 accuracy, and mean Average Precision (mAP) reported on the validation set.
The projection layer, prepended by a batch normalization layer with no learnable parameters, is inserted before the final fully-connected output layer of the network. The features are pre-scaled according to the formula $\frac{2}{3} \median_i \|f_i\|_p$ where $f_i$ are the batch-normalized training set features of the penultimate layer of the pre-trained ResNet model. This corresponds to scaling factors of 250, 15, and 3 for $p=1$, $2$, and $\infty$ respectively, and can be thought of as varying the radius of the $L_p$-ball. The chosen scale ensures that the projection affects most features, but is not too aggressive.

The results indicate that feature projection improves the mAP significantly, with a more modest increase in top-1 and top-5 accuracy. This suggests that projection encourages a more appropriate level of confidence about the predictions, improving the calibration of the model.

\begin{table}[!t]\centering
	\caption{
		The effect of robust pooling layers on point cloud classification results for the ModelNet40 dataset \cite{Wu:CVPR15}, with varying percentages of outliers (O) and the same rate of outliers seen during training and testing.
		Outliers points are uniformly sampled from the unit ball.
		PointNet \cite{Qi:CVPR2017} is compared to our variants that replace max pooling with robust pooling: quadratic (Q), pseudo-Huber (PH), Huber (H), Welsch (W), and truncated quadratic (TQ), all trained from scratch.
		Top-1 accuracy and mean average precision are reported.
	}
	\label{tab:modelnet_pooling}
	\newcolumntype{C}{>{\centering\arraybackslash}X}
	\renewcommand*{\arraystretch}{1.3}
	\setlength{\tabcolsep}{1pt}
	\begin{tabularx}{\columnwidth}{@{}r | C C C C C C | C C C C C C@{}}\hline
	O & \multicolumn{6}{c |}{Top-1 Accuracy \%} & \multicolumn{6}{c}{Mean Average Precision $\times 100$}\\
	\% & \cite{Qi:CVPR2017} & Q & PH & H & W & TQ & \cite{Qi:CVPR2017} & Q & PH & H & W & TQ\\\hline
0 & \textbf{88.4} & 84.7 & 84.7 & 86.3 & 86.1 & 85.4 & \textbf{95.6} & 93.8 & 95.0 & 95.4 & 95.0 & 93.8\\
10 & 79.4 & 84.3 & 85.6 & 85.5 & \textbf{86.6} & 85.5 & 89.4 & 94.3 & 94.6 & \textbf{95.1} & 94.6 & 94.7\\
20 & 76.2 & 84.8 & 84.8 & 85.2 & \textbf{86.3} & 85.5 & 87.8 & 94.8 & 95.0 & \textbf{95.0} & 94.8 & 95.0\\
50 & 72.0 & 84.0 & 83.1 & 83.9 & \textbf{84.3} & 83.9 & 83.3 & 93.8 & 93.5 & 94.3 & 94.8 & \textbf{94.8}\\
90 & 29.7 & 61.7 & 63.4 & 63.1 & \textbf{65.3} & 61.8 & 38.9 & 76.8 & 78.7 & 78.5 & \textbf{79.1} & 76.6\\
\hline
	\end{tabularx}
\end{table}

\begin{table}[!t]\centering
	\caption{
		The effect of robust pooling layers on point cloud classification results for the ModelNet40 dataset \cite{Wu:CVPR15}, with varying percentages of outliers (O) and no outliers seen during training.
		During testing, outlier points are uniformly sampled from the unit ball.
		Models are identical to those in \tabref{tab:modelnet_pooling}.
		Top-1 accuracy and mean average precision are reported.
	}
	\label{tab:modelnet_pooling_2}
	\newcolumntype{C}{>{\centering\arraybackslash}X}
	\renewcommand*{\arraystretch}{1.3}
	\setlength{\tabcolsep}{1pt}
	\begin{tabularx}{\columnwidth}{@{}r | C C C C C C | C C C C C C@{}}\hline
	O & \multicolumn{6}{c |}{Top-1 Accuracy \%} & \multicolumn{6}{c}{Mean Average Precision $\times 100$}\\
	\% & \cite{Qi:CVPR2017} & Q & PH & H & W & TQ & \cite{Qi:CVPR2017} & Q & PH & H & W & TQ\\\hline
0 & \textbf{88.4} & 84.7 & 84.7 & 86.3 & 86.1 & 85.4 & \textbf{95.6} & 93.8 & 95.0 & 95.4 & 95.0 & 93.8\\
1 & 32.6 & 84.9 & 84.7 & \textbf{86.4} & 86.2 & 85.3 & 48.6 & 93.8 & 95.1 & \textbf{95.3} & 95.1 & 93.0\\
10 & 6.47 & 83.9 & 84.6 & 85.3 & \textbf{86.0} & 85.9 & 8.20 & 93.4 & 94.8 & 94.4 & \textbf{94.9} & 93.9\\
20 & 5.95 & 79.6 & 82.8 & 81.1 & 84.7 & \textbf{84.9} & 7.73 & 91.9 & 93.4 & 92.7 & 94.2 & \textbf{94.6}\\
30 & 5.55 & 70.9 & 74.2 & 72.2 & 77.6 & \textbf{83.2} & 6.00 & 87.8 & 89.5 & 85.1 & 90.9 & \textbf{92.8}\\
40 & 5.35 & 55.3 & 59.1 & 55.4 & 63.1 & \textbf{75.6} & 6.41 & 77.6 & 80.2 & 72.7 & 83.2 & \textbf{90.6}\\
50 & 4.86 & 32.9 & 36.0 & 34.6 & 44.1 & \textbf{57.9} & 5.68 & 62.3 & 60.2 & 60.1 & 66.4 & \textbf{85.3}\\
60 & 4.42 & 14.5 & 16.2 & 18.1 & 27.1 & \textbf{30.6} & 5.08 & 39.1 & 36.3 & 38.5 & 42.7 & \textbf{68.5}\\
70 & 4.25 & 5.03 & 6.33 & 7.95 & \textbf{14.1} & 11.9 & 4.66 & 22.5 & 19.3 & 18.4 & 25.7 & \textbf{47.9}\\
80 & 3.11 & 4.10 & 4.51 & 5.64 & \textbf{8.88} & 5.11 & 4.21 & 10.8 & 8.91 & 8.98 & 14.9 & \textbf{26.7}\\
90 & 3.72 & 4.06 & 4.06 & 4.30 & \textbf{5.68} & 4.22 & 4.49 & 8.20 & 5.98 & 5.80 & 8.37 & \textbf{9.78}\\
\hline
	\end{tabularx}
\end{table}

\begin{table}[!t]\centering
	\caption{The effect of projection layers on image classification results for the ImageNet 2012 dataset \cite{ImageNet:2009}. Top-1 and top-5 accuracy, and mean Average Precision (mAP) are reported.
	All models are trained from scratch, with the exception of ResNet-18-pt, which uses the PyTorch pre-trained weights.
	}
	\label{tab:imagenet_projection}
	\newcolumntype{C}{>{\centering\arraybackslash}X}
	\renewcommand*{\arraystretch}{1.3}
	\setlength{\tabcolsep}{2pt}
	\begin{tabularx}{\columnwidth}{@{}l C C C@{}}\hline
		Model & Top-1 Acc. \% & Top-5 Acc. \% & mAP $\times 100$\\\hline
		ResNet-18 & 69.80 & 89.26 & 58.97\\
		ResNet-18-pt & 69.76 & 89.08 & 53.76\\
		ResNet-18-$L_1$Sphere & 69.92 & 89.38 & 62.72\\
		ResNet-18-$L_2$Sphere & \textbf{70.66} & 89.60 & 71.97\\
		ResNet-18-$L_\infty$Sphere & 70.03 & 89.22 & 63.98\\
		ResNet-18-$L_1$Ball & 70.17 & 89.47 & 61.26 \\
		ResNet-18-$L_2$Ball & 70.59 & \textbf{89.70} & \textbf{72.43}\\
		ResNet-18-$L_\infty$Ball & 70.06 & 89.29 & 63.33\\\hline
	\end{tabularx}
\end{table}

\section{Conclusion}
\label{sec:conclusion}

In the preceding sections we have presented some theory and practice for deep declarative
networks. On the one hand we developed nothing new---argmin can simply be viewed as yet another function and
all that is needed is to work out the technical details of how to compute its derivative.
On the other hand deep declarative networks offers a new way of thinking about network design
with expressive processing functions and constraints but without having to worry about
implementation details (or having to back-propagate gradients through complicated algorithmic
procedures).

By facilitating the inclusion of declarative nodes into end-to-end learnable models,
network capacity can be directed towards identifying features and patterns in the data
rather than having to re-learn an approximation for theory that is already well-established
(for example, physical system models) or enforce constraints that we know must hold (for example, that the
output must lie on a manifold). As such, with deep declarative networks, we have the potential
to create more robust models that can generalize better from less training data (and with
fewer parameters) or, indeed, approach problems that could not be tackled by deep learning
previously as demonstrated in recent works such as \citename{Amos:ICML2017} and \citename{Wang:ICML2019}.

As with any new approach there are some shortcomings and much still to do. For example,
it is possible for problems with parametrized constraints to become infeasible during learning, so care should be taken to avoid parameterizations that may result in an empty feasible set. 
More generally, optimization problems can be expensive to solve and those with multiple solutions can create difficulties for end-to-end learning. Also more theory needs to be developed around non-smooth objective functions.
We have presented some techniques for dealing with problems with inequality constraints
(when we may only have a single-sided gradient) or when the KKT optimality conditions cannot be guaranteed
but there are many other avenues to explore. Along these lines it would also be interesting
to consider the trade-off in finding general descent directions rather than exact gradients
and efficient approximations to the forward processing function, especially in the early
stages of learning.

The work of \citename{Chen:NIPS2018} that shows how to differentiate through an ordinary
differential equation also presents interesting directions for future work. Viewing
such models as declarative nodes with differential constraints suggests many
extensions including coupling ordinary differential equations with constrained optimization problems, which may be useful for physical simulations or modeling dynamical systems.

Finally, the fact that deep declarative nodes provide some guarantees on their output
suggests that a principled approach to analyzing the end-to-end behavior of a deep declarative
network might be possible. This would be especially useful in deploying deep learned
models in safety critical applications such as autonomous driving or complex control systems.
And while there are many challenges yet to overcome, the hope is that deep
declarative networks will deliver solutions to problems faced by existing deep neural networks
leading to better robustness and interpretability.


\ifCLASSOPTIONcompsoc
  \section*{Acknowledgments}
\else
  \section*{Acknowledgment}
\fi

We give warm thanks to Bob Williamson and John Lloyd for helpful discussions, and
anonymous reviewers for insightful suggestions.
We thank Itzik Ben-Shabat for recommending experiments on point cloud classification.

\ifCLASSOPTIONcaptionsoff
  \newpage
\fi


{

}

\end{document}